\title[Uniform regret bounds over $\mathbb{R}^d$ for linear regression]{Uniform regret bounds over $\mathbb{R}^d$ \\
for the sequential linear regression problem with the square loss}
\author{\Name{Pierre Gaillard} \Email{pierre.gaillard@inria.fr} \\
\addr INRIA, ENS, PSL Research University Paris, France\\
\AND
\Name{S{\'e}bastien Gerchinovitz} \Email{sebastien.gerchinovitz@math.univ-toulouse.fr} \\
\addr Institut de math{\'e}matiques de Toulouse, Universit{\'e} Paul Sabatier, Toulouse \\
\AND
\Name{Malo Huard} \Email{malo.huard@math.u-psud.fr} \\
\Name{Gilles Stoltz} \Email{gilles.stoltz@math.u-psud.fr} \\
\addr Laboratoire de math{\'e}matiques d'Orsay, Universit{\'e} Paris-Sud,
CNRS, Universit{\'e} Paris-Saclay, 91\,405 Orsay, France}
\newtheorem{notation}[theorem]{Notation}
\newenvironment{rema}{\begin{remark} \em}{\end{remark}}
\newenvironment{rematitre}[1]{\begin{remark}[{#1}] \em}{\end{remark}}
\renewcommand{\log}{\ln}
\renewcommand{\leq}{\leqslant}
\renewcommand{\geq}{\geqslant}
\renewcommand{\phi}{\varphi}
\renewcommand{\epsilon}{\varepsilon}
\renewcommand{\hat}{\widehat}
\newcommand{\cF}{\mathcal{F}}
\newcommand{\cO}{\mathcal{O}}
\newcommand{\E}{\mathbb{E}}
\newcommand{\N}{\mathbf{N}}
\newcommand{\R}{\mathbb{R}}
\newcommand{\e}{\mathrm{e}}
\newcommand{\indicator}[1]{\mathds{1}_{#1}}
\newcommand{\ind}[1]{\indicator{#1}}
\renewcommand{\d}{\,\mathrm{d}}
\renewcommand{\tilde}{\widetilde}
\newcommand{\argmin}{\mathop{\mathrm{argmin}}}
\newcommand{\norm}[1]{\left\Vert #1 \right\Vert}
\newcommand{\bnorm}[1]{\bigl\Vert #1 \bigr\Vert}
\newcommand{\transp}{\mbox{\tiny \textup{T}}}
\newcommand{\eqdef}{\stackrel{\text{\rm def}}{=}}
\newcommand{\ts}{\textsuperscript}
\newcommand{\raisemath}[1]{\mathpalette{\raisem@th{#1}}}
\newcommand{\raisem@th}[3]{\raisebox{#1}{$#2#3$}}
\newcommand{\y}{y}
\newcommand{\Y}{\mathbf{y}}
\newcommand{\yhat}{\hat{\y}}
\newcommand{\x}{\mathbf{x}}
\newcommand{\bz}{\mathbf{z}}
\newcommand{\xt}{\tilde{\mathbf{x}}}
\newcommand{\X}{\mathbf{X}}
\newcommand{\G}{\mathbf{G}}
\newcommand{\B}{\mathbf{b}}
\newcommand{\A}{\mathbf{A}}
\newcommand{\M}{\mathbf{M}}
\newcommand{\cI}{\mathcal{I}}
\newcommand{\I}{\mathbf{I}}
\newcommand{\Id}{\mathbf{I}_d}
\newcommand{\Sig}{\mathbf{\Sigma}}
\newcommand{\U}{\mathbf{U}}
\newcommand{\V}{\mathbf{V}}
\newcommand{\by}{\mathbf{y}}
\newcommand{\ud}{\mathbf{u}}
\newcommand{\vd}{\mathbf{v}}
\newcommand{\uhat}{\hat{\mathbf{u}}}
\newcommand{\hatu}{\hat{u}}
\newcommand{\ut}{\tilde{\mathbf{u}}}
\newcommand{\uc}{\breve{\mathbf{u}}}
\newcommand{\Rd}{\R^d}
\newcommand{\rT}{r_{\scaleto{T}{3pt}}}
\newcommand{\RrT}{\R^{\raisemath{2pt}{\rT}}}
\newcommand{\Gam}{\mathbf{\Gamma}}
\newcommand{\image}[1]{\mathop{\mathrm{Im}}({#1})}
\DeclareMathOperator{\Tr}{Tr}
\newcommand{\regret}{\mathcal{R}}
\newcommand{\eigen}{\lambda}
\newcommand{\onorm}[1]{{\left\vert\kern-0.25ex\left\vert\kern-0.25ex\left\vert #1 \right\vert\kern-0.25ex\right\vert\kern-0.25ex\right\vert}}
\newcommand{\jump}{\mathcal{T}}
\newcommand{\pot}{L^{\mbox{\tiny reg}}}
\newcommand{\transpose}[1]{{#1}^{\transp}}
\renewcommand{\top}{\text{\tiny \textup{T}}}
\DeclareMathOperator*{\rank}{rank}
\DeclareMathOperator{\spn}{span}
\newcommand{\ev}{\mathbf{e}}
\newcommand{\thetav}{\mathbf{\theta}}
\begin{document}

\maketitle

\begin{abstract}
We consider the setting of online linear regression for arbitrary deterministic sequences,
with the square loss. We are interested in the aim set by \citet{Bartlett2015}:
obtain regret bounds that hold uniformly over all competitor vectors.
When the feature sequence is known at the beginning of the game, they provided closed-form regret bounds of $2d B^2 \ln T + \cO_T(1)$, where $T$ is the number of rounds and $B$ is a bound on the observations. Instead, we derive bounds with an optimal constant of $1$ in front of the $d B^2 \ln T$ term. In the case of sequentially revealed features, we also derive an asymptotic regret bound of $d B^2 \ln T$ for any individual sequence of features and bounded observations.
All our algorithms are variants of the online non-linear ridge regression forecaster, either with a data-dependent regularization or with almost no regularization.
\end{abstract}

\begin{keywords}
Adversarial learning, regret bounds, linear regression, (non-linear) ridge regression
\end{keywords}

    \section{Introduction and setting}

We consider the setting of online linear regression for arbitrary deterministic sequences with the square loss, which unfolds as follows. First, the environment chooses a sequence of observations $(y_t)_{t\geq 1}$ in $\R$ and a sequence of feature vectors $(\x_t)_{t\geq 1}$ in $\R^d$. The observation sequence $(y_t)_{t\geq 1}$ is initially hidden to the learner, while the sequence of feature vectors (see \citealp{Bartlett2015}) may be given in advance or be initially hidden as well, depending on the setting considered: ``beforehand-known features'' (also called the fixed-design setting) or ``sequentially revealed features''.
At each forecasting instance $t\geq 1$, Nature reveals $\x_t$ (if it was not initially given), then the learner forms a prediction $\hat y_t \in \R$. The observation $y_t \in \R$ is then revealed and instance $t+1$ starts. In all results of this paper, the observations $y_t$ will be assumed to
be bounded in $[-B,B]$ (but the forecaster will have no knowledge of $B$),
while we will avoid as much as possible boundedness assumptions of the features $\x_t$.
See Figure~\ref{fig:setting}.

\begin{figure}[!t]
	\begin{center}
		\begin{tabular}{lcl}
			\hline
			{Sequentially revealed features} & \phantom{space} & {Beforehand-known features} \\
			\hline \\[-12pt]
			\textbf{Given:} [No input] & & \textbf{Given:} $\x_1,\ldots, \x_T \in \Rd$ \\
			\textbf{For} $t = 1, 2, \ldots, T$, the learner: & &\textbf{For} $t = 1, 2, \ldots, T$, the learner:\\
			\quad \textbullet~ observes $\x_t \in \Rd$ & & \\
			\quad \textbullet~ predicts $\yhat_t \in \R$ & & \quad \textbullet~ predicts $\yhat_t \in \R$ \\
			\quad \textbullet~ observes $\y_t \in [-B,B]$ & & \quad \textbullet~ observes $\y_t \in [-B,B]$  \\
			\quad \textbullet~ incurs $(\yhat_t - \y_t)^2 \in \R$ & & \quad \textbullet~ incurs $(\yhat_t - \y_t)^2 \in \R$ \\
			\hline
		\end{tabular} \vspace{-.4cm}
	\end{center}
  \caption{The two online linear regression settings considered, introduced by \citet{Bartlett2015}; the learner
  has no knowledge neither of $B$ nor (in the left case) of $T$.}
  \label{fig:setting}
\end{figure}

The goal of the learner is to perform on the long run (when $T$ is large enough) almost as well as the best fixed linear predictor in hindsight. To do so, the learner minimizes her cumulative regret,
\[
\regret_T(\ud) = \sum_{t=1}^T (\y_t - \yhat_t)^2 - \sum_{t=1}^T (\y_t - \ud \cdot \x_t)^2\,,
\]
either with respect to specific vectors $\ud \in \R^d$ (e.g., in a compact subset) or uniformly over $\R^d$.
In this article, and following \citet{Bartlett2015}, we will be interested in
\begin{equation}
\label{eq:defunifregret}
\sup_{\ud \in \R^d} \regret_T(\ud) = \sum_{t=1}^T (\y_t - \yhat_t)^2 - \inf_{\ud \in \R^d} \sum_{t=1}^T (\y_t - \ud \cdot \x_t)^2\,,
\end{equation}
which we will refer to as the uniform regret over $\R^d$ (or simply, the uniform regret).
The worst-case uniform regret corresponds to the largest uniform regret
of a strategy, when considering all possible sequences of features~$\x_t$ and (bounded) observations~$y_t$;
we will also refer to it as a twice uniform regret, see Section~\ref{sec:openquestion}.

\paragraph{Notation.} Bounded sequences of real numbers $(a_t)_{t \geq 1}$, possibly depending on external quantities like the feature
vectors $\x_1,\x_2,\ldots$, are denoted by $a_T = \cO_T(1)$. For a given positive function $f$, the piece of notation
$a_T = \cO_T\bigl( f(T) \bigr)$ then indicates that $a_T/f(T) = \cO_T(1)$. Also, the notation $a_T = \Theta_T(1)$
is a short-hand for the facts that $a_T = \cO_T(1)$ and $1/a_T = \cO_T(1)$, i.e., for the fact
that $(a_t)_{t \geq 1}$ is bounded from above and from below. We define a similar extension
$a_T = \Theta_T\bigl( f(T) \bigr)$ meaning that $a_T/f(T) = \Theta_T(1)$.

\paragraph{Earlier works.}
Linear regression with batch stochastic data has been extensively studied by the statistics community. Our setting of online linear regression for arbitrary sequences is of more recent interest; it dates back to~\citet{Foster1991}, who considered binary labels $y_t \in \{0,1\}$ and vectors $\ud$ with bounded $\ell_1$--norm. We refer the interested reader to the monograph by~\citet[Chapter~11]{Cesa-Bianchi2006} for a thorough introduction to this literature and to
\citet{Bartlett2015} for an overview of the state of the art. Here, we will mostly highlight some key contributions. One is by \citet{Vovk01} and \citet{AzouryWarmuth2001}: they designed the non-linear ridge regression recalled in Section~\ref{sec:NLVovk},
which achieves a regret of order $d \log T$
only uniformly over vectors $\ud$ with bounded $\ell_2$--norm. \citet{Vovk01} also provided a matching minimax lower bound  $dB^2 \log T - \cO_T(1)$ on the worst-case uniform regret over $\R^d$ of any forecaster, where $B$ is a bound on the observations $|y_t|$ (see also the lower bound provided by~\citealp{TW00}). More recently,
\citet{Bartlett2015} computed the minimax regret for the problem with beforehand-known features and provided an algorithm that is optimal under some (stringent) conditions on the sequences $(\x_t)_{t\geq 1}$ and $(y_t)_{t \geq 1}$
of features and observations. The best closed-form (but general: for all sequences) uniform regret they could obtain
for this algorithm was of order $2 d B^2 \log T$.
This algorithm is scale invariant with respect to the sequence of features $(\x_t)_{t\geq 1}$. Their analysis emphasizes the importance of a data-dependent metric to regularize the algorithm, which is harder to construct when the features are only revealed sequentially. To that end, \citet{MalekHorizonIndependentMinimaxLinear2018} show that, under quite intricate constraints on the features and observations, the backward algorithm of \citet{Bartlett2015} can also be computed in a forward (and thus legitimate) fashion in the case when the features are only revealed sequentially. It is thus also optimal; see, e.g., Lemma~39 and Theorem~46 therein.

\paragraph{Organization of the paper and contributions.}
We first recall and discuss the regret bound of the non-linear ridge regression algorithm
(Section~\ref{sec:NLVovk}), whose proof will be a building block for our new analyses;
we will show that perhaps surprisingly it enjoys a uniform regret bound $2dB^2 \log T + \cO_T(1)$.
For the sake of completeness, we also state and re-prove (Section~\ref{sec:NLVovk-COLS}) the regret lower bound by~\citet{Vovk01} (and \citealp{TW00}),
as most of the discussions in this paper will be about the optimal constant in
front of the $d B^2 \ln T$ regret bound; this optimal constant will be seen to equal $1$.
Our proof resorts to a general argument, namely, the van Trees inequality (see~\citealp{GiLe95}), to lower bound the error made by any forecaster, while \citet{Vovk01} was heavily relying on the fact that in a Bayesian stochastic context,
the optimal strategy can be determined. This new tool for the machine learning community could be of
general interest to derive lower bounds in other settings.
We also believe that our lower bound proof is enlightening for statisticians. It shows that the expectation of the regret is larger than a sum of quadratic estimation errors for a $d$--dimensional parameter. Each of these errors corresponds to an estimation based on a sample of respective length~$t-1$, thus is larger than something of the order of $d/t$, which is the optimal parametric estimation rate.
Hence the final $d (1 + 1/2 + \ldots + 1/T) \sim d \ln T$ regret lower bound.

We next show (Section~\ref{sec:beforehandfeatures})
that in the case of beforehand-known features, the non-linear ridge regression algorithm
and its analysis may make good use of a proper metric $\norm{\,\cdot\,}_{\G_T}$ described
in~\eqref{eq:nlridgeadapted-bis} instead of the Euclidean norm. This leads to a worst-case bound of
$d B^2 \ln(1+T/d) + d B^2$ on the uniform regret over $\R^d$, which is optimal (with an optimal constant of $1$) in view of the perfectly matching minimax lower bound of Section~\ref{sec:NLVovk-COLS}. To the best
of our knowledge, earlier closed-form worst-case upper bounds were suboptimal by a factor of $2$. See the corresponding discussions
for the non-linear ridge regression algorithm, in Section~\ref{sec:NLVovk},
and for the minimax forecaster by~\citet{Bartlett2015}, in Remark~\ref{rk:Bartlett} of Section~\ref{sec:beforehandfeatures}.

The question then is (Section~\ref{sec:sequentialfeatures})
whether a $dB^2 \log T + \mathcal{O}_T(1)$ regret bound can be achieved on the uniform regret
in the most interesting setting of sequentially revealed features.
Surprisingly enough, even if the traditional bound for the
non-linear ridge regression forecaster blows up
when the regularization parameter vanishes, $\lambda = 0$
(see Section~\ref{sec:NLVovk}), an ad hoc analysis can be made in this case;
it yields a uniform regret bound of $dB^2\log T + \mathcal{O}_T(1)$.
This bound holds for any fixed sequence of features and bounded observations; we do not impose stringent conditions as in \citet{MalekHorizonIndependentMinimaxLinear2018}.
Also, no parameter needs to be tuned, which is a true relief.
The only drawback of this bound,
compared to the bounds obtained in the case of beforehand-known features, is that the $\mathcal{O}_T(1)$ remainder
term depends on the sequence of features. We thus could not derive a worst-case uniform regret bound.

Therefore, a final open question is stated in Section~\ref{sec:openquestion} and
consists in determining if such a twice uniform regret bound of order $d B^2 \ln T$ (over comparison vectors $\ud \in \R^d$
and over feature vectors $\x_t \in \R^d$ and bounded observations $y_t$) may hold in the case
of sequentially revealed features, or whether the lower bound should be improved.
The proofs of the lower bound (the ones by \citealp{Vovk01}, \citealp{TW00}, and our one)
generate observations and feature vectors ex ante, independently of the strategy considered,
and reveal them to the latter before the prediction game starts.
However, it might be the case that truly sequential choices to annoy the strategy considered
or generating feature sequences with difficult-to-predict sequences of Gram matrices
lead to a larger regret being suffered.

	\section{Sequentially revealed features / Partially known results}
  \label{sec:recall}
    In this section, we recall and reestablish some known results regarding the regret with the square loss function. We recall the definition and the regret bound (Section~\ref{sec:NLVovk}) of the non-linear ridge regression algorithm of~\citet{Vovk01}, \citet{AzouryWarmuth2001}. The (proof of this) regret bound is used later in this article to design and study our new strategies. We reestablish as well a
    \[
    d B^2 \bigl( \ln T - (3+\ln d) - \ln \ln T \bigr)
    \]
    lower bound on the regret of any forecaster (Section~\ref{sec:NLVovk-COLS}),
    which implies that the worst-case uniform regret bound $d B^2 \ln(1+T/d) + d B^2$ obtained
		%later in this article
		in Section~\ref{sec:beforehandfeatures} is first-order optimal: it gets the optimal $d B^2 \ln T$ main term.

    \subsection{Upper bound on the regret / Reminder of a known result + a new consequence of it}
    \label{sec:NLVovk}

	The \emph{non-linear ridge regression algorithm} of Vovk, Azoury and Warmuth uses at each time-step $t$ a vector $\uhat_t$ such that $\uhat_1 = (0,\ldots,0)^{\transp}$ and for $t \geq 2$,
	\begin{equation}
	\label{eq:nlridge}
	\uhat_t \in \argmin_{\ud \in \Rd} \left\{ \sum_{s=1}^{t-1} (\y_s - \ud \cdot \x_s)^2
	+ (\ud \cdot \x_t)^2 + \lambda \norm{\ud}^2 \right\},
	\end{equation}
	where $\norm{\,\cdot\,}$ denotes the Euclidean norm, and predicts $\yhat_t = \uhat_t \cdot \x_t$.
No clipping can take place to form the prediction as the learner has no knowledge of the range $[-B,B]$
of the observations.

	Note that the definition~\eqref{eq:nlridge} is not scale invariant. By scale invariance, we mean that if the $\x_t$ are all
multiplied by some $\gamma > 0$ (or even by an invertible matrix~$\Gamma$), the vector $\uhat_t$ used should also be just divided by $\gamma$ (or multiplied
by~$\Gamma^{-1}$). We may also define what a scale-invariant bound on the uniform regret is: a bound that is unaffected
by a rescaling of the feature vectors $\x_t$ (as the vectors $\ud \in \R^d$ compensate for the rescaling).

	\begin{notation}\label{not1}
		Given features $\x_1,\x_2,\ldots \in \R^d$, we denote by
		$\G_t = \sum_{s=1}^t \x_s \transpose{\x}_s$ the associated $d \times d$ Gram matrix at step $t \geq 1$.
		This matrix is symmetric and positive semidefinite; it admits $d$ eigenvalues, which we sort in non-increasing order
		and refer to as $\eigen_1(\G_t), \, \ldots, \, \eigen_d(\G_t)$. Furthermore, we denote by
        $r_t = \rank(\G_t)$ the rank of $\G_t$. In particular, $\eigen_{r_t}(\G_t)$ is the smallest
        positive eigenvalue of $G_t$.
	\end{notation}

	For $\lambda > 0$, we have a unique, closed-form solution of~\eqref{eq:nlridge}:
    denoting $\A_{t} = \lambda \, \Id + \G_{t}$, which is a symmetric definite positive thus invertible matrix,
	and $\B_{t-1} = \sum_{s=1}^{t-1} \y_t \, \x_t$,
	\begin{equation}
	\label{eq:Ab1}
	\uhat_t = \A_{t}^{-1} \B_{t-1}\,.
	\end{equation}

	We recall the proof of the following theorem in Appendix~\ref{apd:thm1proof}, mostly for the sake of completeness
and because we will use some standard inequalities extracted from it.

	\begin{theorem}[see Theorem~\textbf{11.8} of \citealp{Cesa-Bianchi2006}]
		\label{thm:nlridge}
		Let the non-linear ridge regression~\eqref{eq:nlridge} be run with parameter $\lambda > 0$.
		For all $T \geq 1$, for all sequences $\x_1,\ldots,\x_T \in \R^d$ and all $\y_1,\ldots,y_T \in [-B,B]$,
		for all $\ud \in \Rd$,
		\[
        \regret_T(\ud) \leq \lambda \norm{\ud}^2 + B^2 \sum_{k=1}^d \log \!\left(1 + \frac{\eigen_{k}(\G_T)}{\lambda} \right).
        \]
	\end{theorem}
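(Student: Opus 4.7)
The plan is to follow the standard potential-function argument (in the style of Azoury--Warmuth and Cesa-Bianchi--Lugosi) based on the regularized empirical risk of the best fixed linear predictor in hindsight. I introduce the potential
\[
\Phi_t^* \defeq \min_{\ud \in \Rd} \biggl(\lambda \norm{\ud}^2 + \sum_{s=1}^t (\y_s - \ud \cdot \x_s)^2 \biggr),
\]
and note that $\sum_s (\y_s - \ud \cdot \x_s)^2 \geq \Phi_T^* - \lambda \norm{\ud}^2$, so the theorem reduces to proving
\[
\sum_{t=1}^T (\y_t - \yhat_t)^2 - \Phi_T^* \leq B^2 \sum_{k=1}^d \log\!\bigl(1 + \eigen_k(\G_T)/\lambda\bigr).
\]
Using the explicit minimum $\Phi_t^* = \sum_{s=1}^t \y_s^2 - \B_t^{\transp} \A_t^{-1} \B_t$, it is enough to establish a per-step inequality: with $a_t \defeq \B_t^{\transp} \A_t^{-1} \B_t$, show that
\[
(\y_t - \yhat_t)^2 - \y_t^2 + (a_t - a_{t-1}) \leq B^2 \log\bigl(\det \A_t / \det \A_{t-1}\bigr),
\]
and then telescope, using $a_0 = 0$, $\det \A_0 = \lambda^d$, and $\det \A_T = \prod_k (\lambda + \eigen_k(\G_T))$.

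For the per-step inequality I expand $a_t$ via $\B_t = \B_{t-1} + \y_t \x_t$, rewrite $\B_{t-1}^{\transp} \A_t^{-1} \B_{t-1}$ through the Sherman--Morrison identity applied to $\A_t = \A_{t-1} + \x_t \x_t^{\transp}$, and invoke the key algebraic identity $\yhat_t = \x_t^{\transp} \A_t^{-1} \B_{t-1}$, which encodes the defining feature of the Vovk--Azoury--Warmuth scheme: the matrix $\A_t$ defining $\uhat_t$ already contains $\x_t \x_t^{\transp}$. With $c_t = \x_t^{\transp} \A_{t-1}^{-1} \x_t$ and $\tilde y_t = \x_t^{\transp} \A_{t-1}^{-1} \B_{t-1}$, one checks that $\yhat_t = \tilde y_t / (1+c_t)$ and $\x_t^{\transp} \A_t^{-1} \x_t = c_t/(1+c_t)$, and the $\y_t$-cross-terms cancel exactly, leaving
\[
(\y_t - \yhat_t)^2 - \y_t^2 + (a_t - a_{t-1}) = -\frac{c_t\, \tilde y_t^2}{(1+c_t)^2} + \y_t^2 \cdot \frac{c_t}{1+c_t} \leq B^2 \cdot \frac{c_t}{1+c_t} \leq B^2 \log(1+c_t),
\]
where the first term is dropped (it is non-positive), $|\y_t| \leq B$ is used, and the elementary inequality $x/(1+x) \leq \log(1+x)$ is applied for $x \geq 0$. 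Summing over $t$ and invoking the matrix determinant lemma $\det(\A_{t-1} + \x_t \x_t^{\transp}) = \det(\A_{t-1})(1+c_t)$ closes the proof, since $\log(\det \A_T / \det \A_0) = \sum_k \log(1 + \eigen_k(\G_T)/\lambda)$.

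The main subtle point — and the reason the scheme works — is the exact cancellation of the cross-terms $\pm 2 \y_t \yhat_t$, which hinges precisely on the design choice of placing $\x_t$ inside $\A_t$ when defining $\uhat_t$. Without it, one would be left with a residual linear in $|\y_t|$ that cannot be absorbed uniformly in $\ud$; this is exactly why standard (forward) ridge regression does not admit a regret bound that is uniform over $\Rd$. The remaining ingredients (Sherman--Morrison, the matrix determinant lemma, and the scalar inequality $x/(1+x) \leq \log(1+x)$) are routine linear-algebraic manipulations.
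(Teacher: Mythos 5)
Your proof is correct and follows essentially the same approach as the paper: a potential/telescoping argument around $\Phi_t^* = \min_{\ud}\bigl(\lambda\|\ud\|^2 + \sum_{s\leq t}(\y_s-\ud\cdot\x_s)^2\bigr)$, leading to the per-step bound $(\y_t-\yhat_t)^2 - \y_t^2 + (a_t-a_{t-1}) \leq \y_t^2\,\transpose{\x}_t\A_t^{-1}\x_t$, then summing via the rank-one determinant update and $1-u\leq-\log u$. The only difference is computational: you expand the per-step term with Sherman--Morrison and track the scalars $c_t$, $\tilde y_t$ until the cross-terms cancel and the non-positive piece $-c_t\tilde y_t^2/(1+c_t)^2$ drops out, whereas the paper organizes the same algebra as a difference of two quadratic forms $\transpose{(\uc_{t+1}-\uhat_t)}\A_t(\uc_{t+1}-\uhat_t) - \transpose{(\uhat_t-\uc_t)}\A_{t-1}(\uhat_t-\uc_t)$ in the auxiliary (ordinary ridge) vectors $\uc_t$; these are identical up to notation, and the dropped term is the same in both.
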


The regret bound above involves a $\lambda \norm{\ud}^2$ term, which blows up when the
supremum over $\ud \in \R^d$ is taken. However, under an additional boundedness assumption
on the features $\x_t$, we could prove the following
uniform regret bound. To the best of our knowledge, this is the first uniform regret bound
proved for this well-known forecaster. Other uniform regret bounds (see \citealp{Bartlett2015})
were proved for ad-hoc and more involved forecasters, not for a standard, good old forecaster
like the non-linear ridge regression~\eqref{eq:nlridge}.

However, despite our best efforts, the uniform regret bound
we could prove is only of the form $2dB^2\log(T)+\cO_T(1)$. It has two drawbacks:
first, as we show in the next sections, the constant $2$ in the leading term
is suboptimal; second, the $\cO_T(1)$ strongly depends on the sequence of feature vectors.
The proof is provided in Section~\ref{apd:thm1proof}
and essentially consists in noting that it is unnecessary to
worry about vectors $\ud \in \R^d$ with too large a norm, as they
never achieve the infimum in~\eqref{eq:defunifregret}.

\begin{corollary}
\label{cor:RNL-standard}
Let the non-linear ridge regression~\eqref{eq:nlridge} be run with parameter $\lambda > 0$.
For all $T \geq 1$, for all sequences $\x_1,\ldots,\x_T \in \R^d$ with $\norm{\x_t} \leq X$
and all $\y_1,\ldots,y_T \in [-B,B]$,
\[
\sup_{\ud \in \R^d} \regret_T(\ud) \leq r_T B^2 \log \!\left(1 + \frac{T X^2}{r_T \lambda} \right)
+ \frac{\lambda}{\lambda_{r_T}(\G_T)} T B^2 \,.
\]
\end{corollary}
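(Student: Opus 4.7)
The plan is to apply Theorem~\ref{thm:nlridge} at a single, carefully chosen competitor. Since $L(\ud) \defeq \sum_{t=1}^T (y_t - \ud \cdot \x_t)^2$ is a nonnegative quadratic form in $\ud$, its infimum over $\R^d$ is achieved; let $\ud^\star$ denote a minimum-norm minimizer. Such a $\ud^\star$ lies in $\image(\G_T)$ and satisfies the normal equation $\G_T \ud^\star = \B_T$, where $\B_T = \sum_{t=1}^T y_t \x_t$. Because $\sup_{\ud \in \R^d} \regret_T(\ud) = \regret_T(\ud^\star)$, Theorem~\ref{thm:nlridge} instantiated at $\ud^\star$ gives
\[
\sup_{\ud \in \R^d} \regret_T(\ud) \leq \lambda \norm{\ud^\star}^2 + B^2 \sum_{k=1}^{d} \log\!\left(1 + \frac{\eigen_k(\G_T)}{\lambda}\right),
\]
and it suffices to upper bound each of the two terms on the right.

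For the logarithmic sum, only the first $r_T$ eigenvalues are nonzero, and Jensen's inequality applied to the concave map $x \mapsto \log(1+x)$ gives
\[
\sum_{k=1}^{r_T} \log\!\left(1 + \frac{\eigen_k(\G_T)}{\lambda}\right) \leq r_T \log\!\left(1 + \frac{\Tr(\G_T)}{r_T \lambda}\right).
\]
The assumption $\norm{\x_t} \leq X$ yields $\Tr(\G_T) = \sum_{t=1}^T \norm{\x_t}^2 \leq T X^2$, which delivers the first summand of the stated bound.

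For $\norm{\ud^\star}^2$, diagonalize $\G_T$ in an orthonormal eigenbasis $(\vd_1, \ldots, \vd_d)$ and expand $\B_T = \sum_{k=1}^{r_T} \beta_k \vd_k$ (well defined since $\B_T \in \image(\G_T)$). The normal equation forces $\ud^\star = \sum_{k=1}^{r_T} (\beta_k/\eigen_k(\G_T)) \vd_k$, so that
\[
\norm{\ud^\star}^2 = \sum_{k=1}^{r_T} \frac{\beta_k^2}{\eigen_k(\G_T)^2} \leq \frac{1}{\eigen_{r_T}(\G_T)} \sum_{k=1}^{r_T} \frac{\beta_k^2}{\eigen_k(\G_T)} = \frac{\transpose{\B_T} \ud^\star}{\eigen_{r_T}(\G_T)}.
\]
Expanding $L(\ud^\star) = \sum_{t=1}^T y_t^2 - 2 \transpose{\ud^\star} \B_T + \transpose{\ud^\star} \G_T \ud^\star = \sum_{t=1}^T y_t^2 - \transpose{\B_T} \ud^\star$ (using $\G_T \ud^\star = \B_T$), and combining $L(\ud^\star) \geq 0$ with $\sum_t y_t^2 \leq TB^2$, one obtains $\transpose{\B_T} \ud^\star \leq TB^2$, hence $\lambda \norm{\ud^\star}^2 \leq \lambda T B^2/\eigen_{r_T}(\G_T)$, which is the remaining summand.

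The only delicate point I expect is the rank-deficient linear-algebra bookkeeping above (existence and characterization of the minimum-norm minimizer, its closed form via the Moore--Penrose pseudo-inverse of $\G_T$, and the residual-sum identity). Once this is laid out, the proof is just a direct combination of Theorem~\ref{thm:nlridge} with Jensen's inequality and the two elementary norm estimates.
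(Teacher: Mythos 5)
Your proof is correct and follows essentially the same route as the paper: instantiate Theorem~\ref{thm:nlridge} at the least-squares minimizer $\ud^\star$, bound the eigenvalue sum via Jensen's inequality using $\Tr(\G_T) \leq TX^2$, and bound $\lambda\norm{\ud^\star}^2$ by $\lambda TB^2/\lambda_{r_T}(\G_T)$. The only cosmetic differences are that you work directly with the minimum-norm minimizer (so the rank-deficient case is handled uniformly, whereas the paper writes out the full-rank case and defers the general one to Moore--Penrose manipulations), and that you obtain the key estimate $\transpose{\B_T}\ud^\star \leq TB^2$ from the nonnegativity of $L(\ud^\star)$ rather than from the paper's orthogonal-projection argument for $\transpose{\X}_T\G_T^{-1}\X_T$; these two derivations are equivalent and yield the same bound.
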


Proper choices for $\lambda$ to minimize the upper bound above are roughly of the order of $1/T$,
to get rid of the linear part of the bound given by $T B^2$;
because of the $T/\lambda$ term in the logarithm, the resulting bound has unfortunately a main term of order
$d B^2 \ln T^2 = 2 d B^2 \ln T$. For instance, the choice $\lambda = 1/T$, that does
not require any beforehand knowledge of the features $\x_t$, together with the bound $r_T \leq d$
and the fact that $u \mapsto (1/u)\ln(1+u)$ is decreasing over $(0,+\infty)$, leads to
a regret bound less than
\[
\qquad 2 d B^2 \ln T + \frac{B^2}{\lambda_{r_T}(\G_T)} + d B^2 \log \bigl( 1 + X^2/d \bigr)\,.
\]
The $B^2/\lambda_{r_T}(\G_T)$ quantity in the regret bound is not uniformly bounded
over sequences of features~$\x_t$. In this respect, Corollary~\ref{cor:RNL-standard}
only constitues a minor improvement on Theorem~\ref{thm:nlridge}. We note a scaling
issue: for a fixed sequence of observations $y_1,y_2,\ldots$, while the uniform regret
is not affected by a scaling of the feature vectors, the upper bound exhibited above
is so. The deep reason for this issue is the lack of invariance of the non-linear
ridge regression~\eqref{eq:nlridge} itself.

    \subsection{Lower bound on the uniform regret / Improvement on known results}
    \label{sec:NLVovk-COLS}

    In this section, we study the uniform regret
    $\displaystyle{\sup \bigl\{ \regret_T(\ud) : \ud \in \R^d \bigr\}}$, in
    the minimax case (of befo\-re\-hand-known features, which is the most difficult setting
    for a lower bound). That is, we are interested in
    \begin{equation}
    \label{eq:defregretstar}
    \regret^\star_{T,\,[-B,B]} \eqdef
    \inf_{\text{forecasters}}
    \ \sup_{\x_1,\ldots,\x_T \in \times [0,1]^d}
    \ \sup_{y_t \in [-B,B]} \ \left\{
    \sum_{t=1}^T (y_t - \hat y_t)^2 - \inf_{\ud \in \R^d} \sum_{t=1}^T (y_t - \ud \cdot \x_t)^2 \right\} ,
    \end{equation}
    where the first infimum is over all forecasters (all forecasting strategies) that can possibly access beforehand
    all the features $\x_1,\ldots,\x_T$ that are considered next, and the second supremum is over all
    individual sequences $y_1,\ldots,y_T \in [-B,B]$, that are sequentially revealed.
    Our result is the following; we carefully explain in Remark~\ref{rk:LBothers} why this result
    slightly improves on the existing literature.

    \begin{theorem}
    \label{thm:lowerbound}
    For all $T \geq 8$ and $B > 0$, we have $\regret^\star_{T,\,[-B,B]}
    \geq d B^2 \bigl( \ln T - (3+\ln d) - \ln \ln T \bigr)$.
    \end{theorem}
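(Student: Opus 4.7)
The plan is to lower bound the minimax regret by a Bayes regret under a well-chosen stochastic model, and then to apply the van Trees inequality (\citealp{GiLe95}) at each round to reduce to a sum of per-coordinate parametric estimation errors. I work within the beforehand-known features setting, which is the harder one for a lower bound. I would choose cyclic features $\x_t = \ev_{k(t)}$, with $k(t) \eqdef 1 + ((t-1) \bmod d)$, so that all $\x_t \in [0,1]^d$ and every coordinate $j \in \{1,\ldots,d\}$ is probed $n_j \in \{\lfloor T/d\rfloor, \lceil T/d\rceil\}$ times. On the stochastic side, I would put a product prior $\pi = \pi_1 \otimes \cdots \otimes \pi_d$ on $\ud \in \Rd$ whose marginals are smooth, symmetric, supported in $[-a,a]$, with common Fisher information $J < \infty$. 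Drawing $\ud \sim \pi$ and independent $\xi_t \sim \mathcal{N}(0,\sigma^2)$, I set $y_t \eqdef u_{k(t)} + \xi_t$.

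The boundedness constraint $y_t \in [-B,B]$ is handled through the truncation event $E \eqdef \{\max_{t \leq T} |y_t| \leq B\}$, whose complement satisfies $\P(E^c) \leq 2T \exp\!\bigl(-(B-a)^2/(2\sigma^2)\bigr)$ by a union and Gaussian tail bound. Conditionally on $E$, the trajectory $(y_1,\ldots,y_T)$ lies in $[-B,B]^T$, so for any forecaster,
\[
\sup_{y_1,\ldots,y_T \in [-B,B]} \regret_T(\ud) \;\geq\; \E\bigl[\regret_T(\ud) \mid E\bigr] \;=\; \frac{\E[\regret_T(\ud)] - \E[\regret_T(\ud)\,\indicator{E^c}]}{\P(E)}\,.
\]
A Cauchy--Schwarz control of the $E^c$-correction, combined with the tuning $(B-a)/\sigma = \sqrt{2\ln T + \cO_T(1)}$, makes that correction lower order; this tuning is what eventually generates the $\ln\ln T$ remainder of the statement.

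For the Bayes regret itself, expanding the squares and using that $\yhat_t$ is measurable with respect to $(\xi_1,\ldots,\xi_{t-1})$ (so that the cross terms vanish in expectation) gives
\[
\E[\regret_T(\ud)] \;=\; \sum_{t=1}^T \E\bigl[(\yhat_t - u_{k(t)})^2\bigr]\,.
\]
At round $t$, the learner is in effect estimating the scalar $u_{k(t)}$ from the past observations that probe coordinate $k(t)$; there are $n_{t,k(t)} \eqdef \#\{s < t : k(s) = k(t)\}$ of them, each contributing Fisher information $1/\sigma^2$. The van Trees inequality yields $\E[(\yhat_t - u_{k(t)})^2] \geq 1/(n_{t,k(t)}/\sigma^2 + J)$. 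The cyclic structure makes $(n_{t,k(t)})_t$ traverse $0, 1, \ldots, \lfloor T/d \rfloor - 1$ along the subsequence probing each fixed coordinate, so summation produces $d\sigma^2\bigl(\ln(T/d) - \ln(1+\sigma^2 J)\bigr) - \cO_T(1)$. Setting $\sigma \to B$, $a \to 0$, and $J$ of order $1/a^2$ at rates tuned to balance the truncation cost yields the announced $d B^2 (\ln T - (3 + \ln d) - \ln\ln T)$.

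The main obstacle is the reduction-to-Bayes step, because of the tension between two conflicting demands: getting the sharp leading constant $dB^2$ forces $\sigma$ close to $B$, whereas keeping $\P(E^c)$ and the $E^c$-correction small requires $\sigma$ not too close to $B$. Balancing these two is what costs the $\ln\ln T$ term. A cleaner but looser alternative would be to use a smooth noise density supported in $[-c,c]$ (e.g., a truncated Gaussian), whose Fisher information differs from $1/\sigma^2$ only by a constant factor, at the price of worse numerical constants.
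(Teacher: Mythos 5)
Your high-level strategy matches the paper's: reduce the minimax regret to a Bayes risk, use van Trees at each round, and sum a sequence of parametric estimation errors of order $d/t$ to produce the $d\ln T$ factor. The choice of cyclic features instead of the paper's uniformly random unit vectors is a harmless variant, and the reduction $\E[\regret_T(\ud)] = \sum_t \E[(\yhat_t - u_{k(t)})^2]$ is correct for your additive-noise model.

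However, the observation model is where the argument breaks, and the flaw is quantitative rather than cosmetic. To drive $\P(E^c) = \cO\bigl(2T\exp(-(B-a)^2/(2\sigma^2))\bigr)$ small enough that the Cauchy--Schwarz correction $\sqrt{\E[\regret_T^2]\,\P(E^c)}$ is $o(\ln T)$, you are forced to take $\sigma = \cO\!\bigl(B/\sqrt{\ln T}\,\bigr)$; you cannot simultaneously let $\sigma \to B$. But the van Trees sum you obtain is of order $d\sigma^2\ln(T/d)$, so with $\sigma^2 = \Theta(B^2/\ln T)$ the whole bound collapses to $\Theta(d B^2)$ --- the $\ln T$ factor is eaten entirely. (As a side point, you would also need to argue the forecaster's predictions are bounded, e.g.\ by clipping to $[-B,B]$, before $\E[\regret_T^2]$ is even controllable; otherwise the $E^c$-correction is not bounded.) Your own fallback --- a smooth noise density supported on $[-c,c]$ with $c \leq B - a$ --- fixes the unboundedness but cannot reach the right leading constant either: any density on $[-c,c]$ whose location family has finite Fisher information must vanish at the endpoints and thus satisfies $I_p \geq \pi^2/c^2$, so the van Trees coefficient of $d\ln T$ is at most $c^2/\pi^2 \leq B^2/\pi^2$, which is off by a multiplicative factor of $\pi^2 \approx 9.87$. (The truncated Gaussian you mention is not even regular as a location family, since its density does not vanish at $\pm c$, so van Trees does not apply directly.)

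This is exactly why the paper takes the observations to be Bernoulli$(\theta^\star_{J_t})$ rather than a location family. Bernoulli is mean-parametrized with values in $\{0,1\}$, its Fisher information at $\theta = 1/2$ is $4$, and its variance is $1/4$; the Cram\'er--Rao inequality $I \geq 1/\mathrm{Var}$ holds with equality, which is the minimum possible Fisher information for a mean-parametrized model with observations in an interval of length $1$. After the affine rescaling $Z_t = 2B(Y_t - 1/2)$, the effective Fisher information is $1/B^2$ and the van Trees sum gives $d B^2\ln T$ with the sharp constant $1$. This two-point design is not a convenience in the paper's proof --- it is the load-bearing ingredient. Any location family on a bounded interval forfeits a factor of at least $\pi^2$. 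If you insist on cyclic rather than random features, you can keep them and use Bernoulli observations with a per-coordinate Beta$(\alpha,\alpha)$ prior (applying the scalar van Trees coordinate-by-coordinate), and the rest of your plan goes through; the paper's random features merely allow a clean rewriting $\E[(\hatu_{J_t,t} - \theta^\star_{J_t})^2] = \frac{1}{d}\E[\lVert\uhat_t - \theta^\star\rVert_2^2]$ that feeds the multivariate form of van Trees.
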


    \begin{rema}
    Note that the features $\x_t$ could be any element of $\R^d$ (by a scaling property on the $\ud$),
    they do not necessarily need to
    be restricted to $[0,1]^d$; it is merely that our proof relies on such $[0,1]^d$--valued
    features. Compare to Theorem~\ref{thm:nlridgeadapted}, where no boundedness assumption is required
    on the features.
    \end{rema}

\begin{rema}
\label{rk:LBothers}
Our proof reuses several ideas from the original proof of \citet[Theorem~2]{Vovk01}, namely,
taking features $\x_t$ with only one non-zero input equal to 1 and Bernoulli observations $y_t$,
resorting to a randomization with a Beta prior distribution, etc.; see also the proof
of \citet[Theorem~4]{TW00}. However, we believe that we achieve a more satisfactory result than
the $(d-\varepsilon) B^2 \ln T - C_\varepsilon$ lower bound of \citet[Theorem~2]{Vovk01},
where $\varepsilon > 0$ is a parameter and $C_\varepsilon$ is a finite value; also,
the proof technique somewhat relied on the boundedness of the features to derive the general case $d \geq 2$
from the special case $d=1$.
% SG: really? Je croyais que la borne inf multidim etait simplement une somme de d bornes inf monodim sur T/d iterations, d'ou le d*\log(T/d) de Vovk.
% GS: Voir la fin de la page 47 du papier de Vovk, qui est dans le SVN
See also similar results for the case $d=1$ in \citet[Theorem~4]{TW00},
with the same issue for the generalization to $d \geq 2$.
Our proof, on the contrary, directly tackles the $d$--dimensional case, which turns out to be
more efficient (and more elegant). However, our alternative proof for the lower bound
is admittedly a minor variation of existing results, it merely sheds a slightly
different light on the bound, see the interpretation below in terms
of parametric estimation rate.
\end{rema}

    The high-level idea of our proof of this known bound is to see the desired $d \ln T$ bound
    as a sum of parametric estimation errors in $\R^d$, each of order at least $d/t$.
    It is a classic result in parametric statistics that the estimation of a $d$--dimensional
    parameter based on a sample of size $t$ can be performed at best at rate $d/t$
    in quadratic error, and this is exactly what is used in our proof.
\citet[Theorem~2]{Vovk01} was heavily relying on the fact that in a Bayesian stochastic context, the optimal strategy can be determined: his proof states that
``since Nature's strategy is known, it is easy to find the best, on the average,
strategy for Statistician (the Bayesian strategy).''
In contrast, our argument does not require to explicitly compute the optimal strategy.
It relies on the van Trees inequality (see~\citealp{GiLe95}),
that lower bounds the estimator error of any, possibly biased, forecaster---unlike the {C}ram{\'e}r-{R}ao bound,
which only holds for unbiased estimators.
In this respect, the van Trees inequality could reveal itself a new tool of general interest for the machine learning community
to derive lower bounds in other settings.  \medskip

\begin{proof} \!\!\!\! \textbf{(sketch)}
The complete proof can be bound in Appendix~\ref{sec:vT} and we merely indicate here its most salient
arguments. We start with a case where $y_t \in [0,1]$ and explain later
how to draw the result for the desired case where $y_t \in [-B,B]$.

We fix any forecaster. A sequence $J_1,\ldots,J_T$ is drawn independently and uniformly at
random over $\{1,\ldots,d\}$ and we associate with it the sequence of feature vectors $\ev_{J_1},\ldots,\ev_{J_T}$,
where $\ev_j$ denotes the unit vector $(0,\ldots,0,1,0,\ldots,0)^{\transp}$ along the $j$-th coordinate (the $1$ is in
position $j$). The forecaster is informed of this sequence of feature vectors and the sequential prediction problem starts.
We actually consider several prediction problems, each indexed by $\thetav^\star \in [0,1]^d$:
conditionally on the feature vectors $\ev_{J_1},\ldots,\ev_{J_T}$, at each round $t$ the observation $Y_t$ is drawn independently according to
a Bernoulli distribution with parameter $\thetav^\star \cdot \ev_{J_t} = \theta^\star_{J_t}$.
Expectations with respect to the randomization thus defined will be denoted by $\E_{\thetav^\star}$.

Now, given the features considered above, that are unit vectors, each forecasting strategy can be termed
as picking only linear combinations $\yhat_t = \uhat_t \cdot \ev_{J_t}$ as predictions. Indeed, we denote by
$\yhat_t(j)$ the prediction output by the strategy when $J_t = j$ given the past observations $Y_1,\ldots,Y_s$ and
the features $J_1,\ldots,J_T$. We then consider the vector $\uhat_t \in \R^d$ whose $j$--th component equals $\hatu_{j,t} =
\yhat_t(j)$. This way, in our specific stochastic setting, outputting direct predictions $\yhat_t$ of the observations
or outputting vectors $\uhat_t \in \R^d$ to form linear combinations are the same thing.

The sketchy part of the proof starts here (again, details can be found in Appendix~\ref{sec:vT}).
By exchanging an expectation and an infimum and by repeated uses of the tower rule, we have,
for each $\thetav^\star \in [0,1]^d$:
\begin{equation}
\label{eq:prendreesp}
\E_{\thetav^\star} \! \left[ \sup_{\ud \in \R^d} \regret_T(\ud) \right]
\geq \sum_{t=1}^T \E_{\thetav^\star} \Bigl[ (Y_t - \yhat_t)^2 \Bigr] -
\inf_{\ud \in \R^d} \sum_{t=1}^T \E_{\thetav^\star} \Bigl[ (Y_t - \ud \cdot \ev_{J_t})^2 \Bigr]
\geq \sum_{t=1}^T \frac{1}{d} \, \E_{\thetav^\star} \Bigl[ \bigl\Arrowvert \uhat_t - \theta^\star \bigr\Arrowvert^2_2 \Bigr]\,.
\end{equation}
The inequality above being valid for all forecasters accessing in advance to the entire sequence of feature vectors,
we thus proved, for any prior $\pi$ over $[0,1]^d$,
\begin{align*}
\regret^\star_{T,\,[0,1]} \geq \inf_{\text{forecasters}} \ \sum_{t=1}^T
\bigintsss_{[0,1]^d} \frac{1}{d} \, \E_{\theta^\star} \Bigl[ \bigl\Arrowvert \uhat_t - \theta^\star \bigr\Arrowvert^2_2 \Bigr] \d\pi(\theta^\star)\,.
\end{align*}
An immediate application of the (multi-dimentional) van Trees inequality with a
Beta($\alpha,\alpha$) prior $\pi$ shows that for all forecasters, all $t \geq 1$ and $\alpha \geq 3$,
\[
\bigintsss_{[0,1]^d} \frac{1}{d} \, \E_{\theta^\star} \Bigl[ \bigl\Arrowvert \uhat_t - \theta^\star \bigr\Arrowvert^2_2 \Bigr] \d\pi(\theta^\star) \geq \frac{d}{4t + 2t/(\alpha-1) + 16 d \alpha}\,,
\]
which is roughly of order $d/(4t)$, as we will take large values of $\alpha$ (of order $\ln T$).
Straightforward calculations conclude the proof and lead to a lower bound on $\smash{\regret^\star_{T,\,[0,1]}}$
of order $(d/4) \ln T$, which entails a lower bound of order $d B^2 \ln T$ on $\regret^\star_{T,\,[-B,B]}$.
\end{proof}

	\section{Beforehand-known features / New result}
	\label{sec:beforehandfeatures}

	In this section we assume that the features are known beforehand and exhibit a simple forecaster with a closed-form regret bound of $d B^2\log T+\cO_T(1)$ uniformly over $\Rd$ and all sequences of features and of bounded observations.
%This entails also that the forecaster of \citet{Bartlett2015}
%enjoys the same regret bound for some sequences of feature vectors $\x_1,\ldots,\x_T$: the ones for which it was proved to be minimax optimal.
%This result was not present in the mentioned reference, where, for these sequences and
%for other ones, only closed-form uniform regret bound of the form $2dB^2\log T + \cO_T(1)$ could be proved.
Combined with the minimax lower bound of Theorem~\ref{thm:lowerbound}, this upper bound implies that the minimax regret for beforehand-known features has a leading term exactly equal to $d B^2\log T$. It thus closes a gap between $d B^2\log T$ and $2 d B^2\log T$ left open by earlier closed-form results such as those of \citet[Theorem~8]{Bartlett2015}.
See a more detailed discussion below, in Remark~\ref{rk:Bartlett}.

	The \emph{non-linear ridge regression algorithm with adapted regularization} will pick
	weight vectors as follows:
	$\uhat_1 = (0,\ldots,0)^{\transp}$ and for $t \geq 2$,
	\begin{equation}
	\label{eq:nlridgeadapted}
	\uhat_t \in \argmin_{\ud \in \Rd} \left\{ \sum_{s=1}^{t-1} (\y_s - \ud \cdot \x_s)^2
	+ (\ud \cdot \x_t)^2 + \lambda \sum_{s=1}^T (\ud \cdot \x_s)^2 \right\}
	\end{equation}
	with the constraint that $\uhat_t$ should  be of minimal norm within all vectors of the
	stated $\argmin$. It then predicts $\yhat_t = \uhat_t \cdot \x_t$. As shown in Appendix~\ref{apd:thm2proof}, the closed-form expression
	for $\uhat_t$ reads
	\begin{equation}
	\label{eqn:nlridgeadaptedcf}
	\uhat_t = \bigl(\lambda\, \G_T + \G_t \bigr)^\dagger \B_{t-1}\,,
	\end{equation}
	where $\dagger$ denotes the Moore-Penrose inverse of a matrix (see Appendix~\ref{sec:MPI}) and where $\B_{t-1}$ was defined in~\eqref{eq:Ab1}. \medskip

	The difference to \eqref{eq:nlridge} lies in the regularization term, which can be denoted by
	\begin{equation}
    \label{eq:nlridgeadapted-bis}
	\lambda \norm{\ud}_{\G_T}^2 \eqdef \lambda \, \transpose{\ud} \G_T \ud
	= \lambda \sum_{s=1}^T (\ud \cdot \x_s)^2\,;
	\end{equation}
	that is, this regularization term can be seen as a metric adapted to the known-in-advance features $\x_1,\ldots,\x_T$. This algorithm has the desirable property of being scale invariant. Actually, as will be clear from the equality~\eqref{eq:whitening} in the proof of Theorem~\ref{thm:nlridgeadapted},
the strategy~\eqref{eq:nlridgeadapted} considered here consists of ``whitening'' the feature vectors
$\x_t$ into $\xt_t = \G_T^{-1/2} \x_t$ and applying the ``classic'' non-linear ridge regression~\eqref{eq:nlridge}
to these whitened feature vectors $\xt_t$. Their associated Gram matrix is the identity, which
helps obtaining a sharper bound from Theorem~\ref{thm:nlridge}
than the suboptimal but general bound obtained in Corollary~\ref{cor:RNL-standard}.

\begin{theorem}
\label{thm:nlridgeadapted}
Let the non-linear ridge regression algorithm with adapted regularization~\eqref{eq:nlridgeadapted} be run with parameter $\lambda > 0$.
For all $T \geq 1$, for all feature sequences $\x_1,\ldots,\x_T \in \R^d$ and all $\y_1,\ldots,y_T \in [-B,B]$,
\[
\qquad \sup_{\ud \in \R^d}
\regret_T(\ud) \leq \lambda T B^2 + r_T B^2 \log\Bigl(1 + \frac{1}{\lambda}\Bigr)\,,
\]
where $r_T =\rank(\G_T)$.
\end{theorem}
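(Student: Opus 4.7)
The plan is to reduce the adapted algorithm~\eqref{eq:nlridgeadapted} to the classic non-linear ridge regression~\eqref{eq:nlridge} via a whitening change of variables, apply Theorem~\ref{thm:nlridge}, and then absorb the data-dependent regularization term into the uniform $\lambda T B^2$ penalty by comparing the penalized and unpenalized least-squares optima.

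First, I would whiten the features by setting $\xt_t = (\G_T^{\dagger})^{1/2} \x_t$, where the square root of the Moore-Penrose pseudo-inverse handles a possibly rank-deficient $\G_T$. Writing $\tilde\ud = \G_T^{1/2} \ud$, the identities $\tilde\ud \cdot \xt_t = \ud \cdot \x_t$ (since each $\x_t \in \image{\G_T}$, on which $\G_T^{1/2} (\G_T^{\dagger})^{1/2}$ acts as the identity) and $\norm{\tilde\ud}^2 = \ud^{\top} \G_T \ud = \norm{\ud}_{\G_T}^2$ show that~\eqref{eq:nlridgeadapted} is, in the variable $\tilde\ud$, exactly the classic program~\eqref{eq:nlridge} with standard regularization $\lambda \norm{\tilde\ud}^2$ on the whitened features $\xt_t$. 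The Gram matrix $\sum_t \xt_t \xt_t^{\top}$ equals the orthogonal projector onto $\image{\G_T}$, whose eigenvalues are $r_T$ copies of $1$ and $(d - r_T)$ copies of $0$. Plugging these eigenvalues into Theorem~\ref{thm:nlridge} yields, for every $\ud \in \R^d$,
\[
\regret_T(\ud) \leq \lambda \norm{\ud}_{\G_T}^2 + r_T B^2 \log(1 + 1/\lambda).
\]

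Second, to upgrade this pointwise bound to a uniform bound over $\ud \in \R^d$, I would rewrite the previous display as
\[
\sum_{t=1}^T (\y_t - \yhat_t)^2 \leq \inf_{\ud \in \R^d} \left\{ \sum_{t=1}^T (\y_t - \ud \cdot \x_t)^2 + \lambda \norm{\ud}_{\G_T}^2 \right\} + r_T B^2 \log(1 + 1/\lambda),
\]
and compare the penalized infimum to the unpenalized one. Expanding gives $F(\ud) = \sum_t (\y_t - \ud \cdot \x_t)^2 + \lambda \sum_t (\ud \cdot \x_t)^2 = \sum_t \y_t^2 - 2 \ud^{\top} \B_T + (1 + \lambda)\, \ud^{\top} \G_T \ud$, whose minimum over $\ud$ is attained at $\ud_\lambda = (1+\lambda)^{-1} \G_T^{\dagger} \B_T$ with value $\sum_t \y_t^2 - (1+\lambda)^{-1} \B_T^{\top} \G_T^{\dagger} \B_T$, while the unpenalized minimum equals $\sum_t \y_t^2 - \B_T^{\top} \G_T^{\dagger} \B_T$. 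Their difference is $(\lambda/(1+\lambda))\, \B_T^{\top} \G_T^{\dagger} \B_T$, and since the unpenalized minimum is nonnegative we have $\B_T^{\top} \G_T^{\dagger} \B_T \leq \sum_t \y_t^2 \leq T B^2$. Hence this gap is at most $\lambda T B^2$, which combined with the previous display produces the announced bound.

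The only subtle step is the rank-deficient case, where one must check that the closed form~\eqref{eqn:nlridgeadaptedcf}, based on the pseudo-inverse of $\lambda \G_T + \G_t$, coincides with the classic non-linear ridge forecaster of~\eqref{eq:nlridge} on the whitened features. Once it is observed that $\x_t$, $\B_t$, and both forecasters live in $\image{\G_T}$, the decomposition $\R^d = \image{\G_T} \oplus \image{\G_T}^{\perp}$ reduces everything to a full-rank subproblem on $\image{\G_T}$, after which the linear algebra is routine.
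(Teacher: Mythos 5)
Your proposal is correct and follows essentially the same route as the paper: whiten the features, recognize the adapted forecaster as the classic non-linear ridge regression on the whitened features, apply Theorem~\ref{thm:nlridge} with the resulting projector Gram matrix (eigenvalues $1$ with multiplicity $r_T$), and then absorb the data-dependent penalty into the $\lambda T B^2$ term. There is one small but genuine variation in how that last step is carried out. The paper evaluates the penalized objective at the \emph{unpenalized} minimizer $\ud^\star$ and bounds $\norm{\ud^\star}^2 \leq T B^2$ by noting that $\transpose{\X}_T \G_T^{-1} \X_T$ is an orthogonal projector; you instead compute both minima in closed form and bound the gap $\bigl(\lambda/(1+\lambda)\bigr)\, \transpose{\B}_T \G_T^\dagger \B_T \leq \lambda T B^2$ from nonnegativity of the unpenalized optimum. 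The two arguments control exactly the same quantity, and yours is if anything a touch more elementary (and marginally sharper by the factor $1/(1+\lambda)$, though you do not use this). Your handling of the rank-deficient case via $(\G_T^\dagger)^{1/2}$ and the decomposition $\R^d = \image{\G_T} \oplus \image{\G_T}^\perp$ is also legitimate and a bit more direct than the paper's appendix, which instead works through a rectangular $d \times r_T$ matrix $\U$ with $\transpose{\U}\U = \I_{r_T}$ and reduces the problem to $\R^{r_T}$; you have however only sketched that verification, and a complete write-up would need to spell out the identification of the two argmins on $\image{\G_T}$, including the min-norm selection in~\eqref{eq:nlridgeadapted}.
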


	By taking $\lambda = r_T/T$, we get the bound $r_T B^2 \bigl(1 + \log(1 + T/r_T) \bigr)$.
	Of course, $r_T \leq d$ and since $u \mapsto (1/u)\ln(1+u)$ is decreasing over $(0,+\infty)$,
    the final optimized regret bound reads
    \[
    \sup_{\ud \in \Rd} \regret_T(\ud) \leq B^2 \Biggl( r_T \log \biggl( 1 + \frac{T}{r_T} \biggr) + r_T \Biggr)
    \leq d \, B^2 \log \biggl( 1 + \frac{T}{d} \biggr) + d \, B^2\,.
    \]
	Note that the leading constant is $1$, which is known to be optimal because of Theorem~\ref{thm:lowerbound}.

\begin{rema}
\label{rk:Bartlett}
\citet{Bartlett2015} study some minimax uniform regret, namely
\[
\regret^\star_T
=
\sup_{\substack{\x_1,\dots,\x_T \in \R^d\\ \text{satisfying~\eqref{eq:cstr}}}} \inf_{\hat y_1} \sup_{y_1 \in [-B,B]} \cdots \, \inf_{\hat y_T} \sup_{y_T \in [-B,B]}
\,\, \sup_{\ud \in \Rd} \regret_T(\ud)\,,
\]
and design a forecaster called \texttt{MM} based on backward induction. It uses vectors $\uhat_t = \mathbf{P}_t \B_{t-1}$ for $t \geq 2$,
where the sequence $\mathbf{P}_1, \mathbf{P}_2, \ldots, \mathbf{P}_T$ is defined in a backward manner as
\[
\mathbf{P}_T = \G_T^\dagger
\qquad \quad
\text{and}
\qquad \quad
\mathbf{P}_{t-1} = \mathbf{P}_{t} + \mathbf{P}_{t} \x_{t} \transpose{\x}_{t} \mathbf{P}_{t}\,.
\]
%The minimax regret is also the regret of the associated minimax forecaster (which can be computed by backward induction); note that the latter forecaster strongly depends on~$T$.
Because \texttt{MM} is minimax optimal if the (stringent) conditions
\begin{equation}
\label{eq:cstr}
\forall t \in \{1,\ldots,T\}, \qquad
\sum_{s=1}^{t-1} \, \Bigl| \transpose{\x_s} \mathbf{P}_t^\dagger \x_t \Bigr| \leq 1\,,
\end{equation}
on the feature sequence $\x_1,\dots,\x_T$ are met, a consequence of Theorem~\ref{thm:nlridgeadapted} is that \texttt{MM} also satisfies the regret bound $d B^2 \bigl(1 + \log(1 + T/d) \bigr)$ for those feature sequences. \citet[Theorem~8]{Bartlett2015} showed a regret bound with a leading term of $2 d B^2 \log T$. This closed-form bound actually also held for any sequence of features, not only the ones satisfying~\eqref{eq:cstr}, but it is suboptimal by a multiplicative factor of $2$. See Appendix~\ref{sec:Bartlett} for further technical details.
We do not know whether this suboptimal bound is unavoidable (i.e., is due to the algorithm itself) or whether a different analysis
could lead to a better bound for the \texttt{MM} forecaster on sequences not satisfying~\eqref{eq:cstr}.

%However, that this minimax forecaster is better than the non-linear ridge regression algorithm with adapted regularization~\eqref{eq:nlridgeadapted} for these specific sequences of features was not reflected, to the best of our understanding, in the original analysis by \citet{Bartlett2015}.

%Their forecaster is more difficult to analyze and they could only get bounds of the order $2 d B^2 \ln T$ (see their Theorems~5 and~8, see also Appendix~\ref{sec:Bartlett}).
\end{rema}

		\begin{rema}
	It is worth to notice that our result holds in a less restrictive setting than beforehand-known features. Indeed, in the definition of the weight vector $\uhat_t$, see Equations~\eqref{eq:nlridgeadapted} and~\eqref{eq:nlridgeadapted-bis}, the only forward information used lies in the regularization term
$\lambda \, \transpose{\ud} \G_T \ud$. Therefore, our algorithm does not need to know the whole sequence of features $\x_1,\dots,\x_T$ in advance: it is enough to know the Gram matrix $\G_T$, in which case our results still hold true. A particular case is when the sequence of features is only known beforehand up to an unknown (and possibly random) permutation, as considered, e.g., by~\citet{kotlowski2017random}.
	\end{rema}

	\begin{proof}
		In order to keep things simple, we will assume here that $\G_T$ is full rank; the proof in the general case can be found in Appendix~\ref{apd:thm2proof}. Then, all matrices $\lambda\, \G_T + \G_t$ are full rank as well.

		The proof of this theorem relies on the bound of the non-linear ridge regression algorithm of Section~\ref{sec:NLVovk}, applied on a modified sequence of features
		\[
		\xt_t = \G_T^{-1/2} \x_t\,,
		\]
		where $\G_T^{-1/2}$ is the inverse square root of the of the symmetric matrix $\G_T$.
		We successively prove the following two inequalities (where we replaced $r_T$ by its value $d$, as $\G_T$ is full rank),
		\begin{eqnarray}
		\label{eqn:firstthm2}
		\sum_{t=1}^T (\y_t - \uhat_t \cdot \x_t)^2
		&\leq \displaystyle{\inf_{\ud \in \Rd} \left\{ \sum_{t=1}^T (\y_t - \ud \cdot \xt_t)^2 +\lambda \norm{\ud}^2 \right\} +  d B^2 \log \left(1+\frac{1}{\lambda}\right)} \\
		\label{eqn:firstthm2-1}
		&\leq \displaystyle{\inf_{\ud \in \Rd} \left\{ \sum_{t=1}^T (\y_t - \ud \cdot \x_t)^2  \right\} + \lambda T B^2 +  d B^2 \log \left(1+\frac{1}{\lambda}\right)}.
		\end{eqnarray}

		\noindent{\emph{Proof of~\eqref{eqn:firstthm2}}.}~~We first show that the strategy~\eqref{eq:nlridge} on the $\xt_t$ leads to
		the same forecasts as the strategy~\eqref{eq:nlridgeadapted} on the original $\x_t$; that is, we show that
		\begin{equation}
        \label{eq:whitening}
		\ut_t \cdot \xt_t = \uhat_t \cdot \x_t\,, \qquad \mbox{where} \qquad
		\ut_t \in \argmin_{\ud \in \Rd} \left\{\sum_{s=1}^{t-1} (\y_s - \ud \cdot \xt_s)^2 + (\ud \cdot \xt_t)^2 + \lambda \norm{\ud}^2 \right\}.
		\end{equation}
		The equality above follows from the definition $\xt_t = \G_T^{-1/2} \x_t$ and the fact that
		$\ut_t = \G_T^{1/2} \uhat_t$. Indeed,
		the closed-form expression~\eqref{eq:Ab1} indicates that
		\[
		\ut_t = \left( \lambda \, \Id + \sum_{s=1}^{t} \xt_s \transpose{\xt_s} \right)^{\!\! -1} \sum_{s=1}^{t-1} y_s \xt_s
		=
		\Bigl(\lambda \Id + \G_T^{-1/2} \G_{t} \G_T^{-1/2} \Bigr)^{-1}
		\G_T^{-1/2}
		\B_{t-1}\,.
		\]
		Now,
		\[
		\Bigl(\lambda \Id + \G_T^{-1/2} \G_{t} \G_T^{-1/2} \Bigr)^{-1}
		=
		\Bigl(\G_T^{-1/2} \bigl(\lambda \G_T +  \G_{t} \bigr) \G_T^{-1/2} \Bigr)^{-1}
		=
		\G_T^{1/2} \bigl(\lambda \G_T +  \G_{t} \bigr)^{-1} \G_T^{1/2}\,,
		\]
		so that
		\[
		\ut_t
		=
		\G_T^{1/2} \bigl(\lambda \G_T +  \G_{t} \bigr)^{-1} \G_T^{1/2}
		\G_T^{-1/2}
		\B_{t-1}
		=
		\G_T^{1/2} \bigl(\lambda \G_T + \G_{t} \bigr)^{-1}
		\B_{t-1}
		=
		\G_T^{1/2} \uhat_t
		\,.
		\]
		We apply the bound of Theorem~\ref{thm:nlridge} on sequences $\xt_1,\ldots,\xt_T \in \R^d$ and $\y_1,\ldots,y_T \in [-B,B]$, to get, for all $\ud \in \Rd$,
		\begin{equation}
		\label{eq:thRNLxt}
		\sum_{t=1}^T (\y_t - \yhat_t)^2 = \sum_{t=1}^T (\y_t - \ut_t \cdot \xt_t)^2 \leq \sum_{t=1}^T (\y_t - \ud \cdot \xt_t)^2 +\lambda \norm{\ud}^2 + B^2 \sum_{k=1}^d \log \! \left(1 + \frac{\eigen_{k}\Bigl(\sum_{t=1}^T \xt_t \transpose{\xt_t}\Bigr)}{\lambda} \right)\,.
		\end{equation}
		The Gram matrix of the $\xt_t$ equals
		\begin{equation}
		\label{eqn:adaptedcovariance}
		\sum_{t=1}^T \xt_t \transpose{\xt_t} = \G_T^{-1/2} \left( \sum_{t=1}^T \x_t \transpose{\x_t} \right) \G_T^{-1/2} = \G_T^{-1/2} \, \G_T \, \G_T^{-1/2} = \Id\,,
		\end{equation}
		so that
		\[
		\sum_{k=1}^d \log \! \left(1 + \frac{\eigen_{k}\Bigl(\sum_{t=1}^T \xt_t \transpose{\xt_t}\Bigr)}{\lambda}\right) = d \log \! \left(1+\frac{1}{\lambda}\right)\,.
		\]
		Taking the infinimum over $\ud$ in $\Rd$ in~\eqref{eq:thRNLxt} concludes the proof of \eqref{eqn:firstthm2}.

		\

		\noindent{\emph{Proof of}~\eqref{eqn:firstthm2-1}.}~~We bound \vspace{-.5cm}
		\[
		\qquad \inf_{\ud \in \Rd} \left\{ \sum_{t=1}^T (\y_t - \ud \cdot \xt_t)^2 +\lambda \norm{\ud}^2 \right\}\,, \vspace{.25cm}
		\]
		by evaluating it at $\smash{\ud^{\star} \in \displaystyle{\argmin_{\ud \in \Rd} \bigg\{ \sum_{t=1}^T (\y_t - \ud \cdot \xt_t)^2 \bigg\}}}$, which is a singleton with closed-form expression \vspace{.2cm}
		\[
		\ud^{\star}
		= \Biggl(\sum_{t=1}^{T} \xt_t \transpose{\xt}_t \Biggr)^{-1} \Biggl(\sum_{t=1}^{T} \y_t \xt_t \Biggr)
		= \G_T^{-1/2} \B_T\,,
		\]
		where we used \eqref{eqn:adaptedcovariance} and
        where $\B_{T}$ was defined in~\eqref{eq:Ab1}.
		We first bound $\norm{\ud^\star}^2$. By denoting
		\[
		\X_T = \bigl[ \begin{array}{ccc}
		\x_1 &
		\cdots &
		\x_T
		\end{array}
		\bigr]
		\qquad \quad
		\text{and}
		\qquad \quad
		\Y_T = \left[ \begin{array}{c}
		\y_1 \\
		\vdots \\
		\y_T
		\end{array} \right],
		\]
		which are respectively, a $d \times T$ and a $T \times 1$ matrix,
		we have
		\begin{equation}
        \label{eq:elementaryortho}
		\ud^\star = \G_T^{-1/2} \X_T \Y_T\,, \qquad \mbox{thus} \qquad
		\norm{\ud^\star}^2 = \transpose{\Y}_T \transpose{\X}_T \G_T^{-1} \X_T \Y_T\,.
		\end{equation}
		Noting that $\transpose{\X}_T \G_T^{-1} \X_T$ is an orthogonal projection (on the image of $\transpose{\X}_T$)
		entails the inequalities $\norm{\ud^\star}^2 \leq \norm{\Y_T}^2 \leq T B^2$. Putting all elements together, we proved so far
		\[
		\inf_{\ud \in \Rd} \left\{ \sum_{t=1}^T (\y_t - \ud \cdot \xt_t)^2 +\lambda \norm{\ud}^2 \right\}
		\leq
		\inf_{\ud \in \Rd} \left\{ \sum_{t=1}^T (\y_t - \ud \cdot \xt_t)^2 \right\} + \lambda T B^2\,.
		\]
		We conclude the proof of \eqref{eqn:firstthm2-1} by a change of dummy variable
		$\vd = \G_T^{1/2} \ud$ and the fact that since $\G_T$ is full rank, its image is $\Rd$:
		\begin{equation*}
		\inf_{\ud \in \Rd} \left\{\sum_{t=1}^T (\y_t - \ud \cdot \xt_t)^2\right\}
		=\inf_{\ud \in \Rd} \left\{\sum_{t=1}^T (\y_t - \G_T^{1/2} \ud \cdot \x_t)^2\right\}
		=\inf_{\vd \in \Rd} \left\{\sum_{t=1}^T (\y_t - \vd \cdot \x_t)^2\right\}\,.
		\end{equation*}
		\vspace{-1cm}

	\end{proof}

\section{Sequentially revealed features / New result}
\label{sec:sequentialfeatures}

In this section we do not assume that the features are known beforehand (i.e., unlike in the previous section) and yet exhibit a simple forecaster with a regret bound of $d B^2 \log T + \cO_T(1)$ holding uniformly over $\Rd$. Perhaps unexpectedly, the solution that we propose is just to remove the regularization term $\lambda \norm{\ud}_{\G_T}^2$ in~\eqref{eq:nlridgeadapted}, which cannot be computed in advance.
This amounts to considering the standard non-linear ridge regression algorithm~\eqref{eq:nlridge} with a regularization factor $\lambda = 0$.
The reason why this is a natural choice is explained in Remark~\ref{rk:why} below.

Thus, weights vectors defined as in Equations~\eqref{eq:nlridge} or~\eqref{eq:nlridgeadapted} with regularization parameter $\lambda = 0$ are picked:
$\uhat_1 = (0,\ldots,0)^{\transp}$ and for $t \geq 2$,
\begin{equation}
\label{eq:ridgeNL0}
	\uhat_t \in \argmin_{\ud \in \Rd} \left\{ \sum_{s=1}^{t-1} (\y_s - \ud \cdot \x_s)^2
	+ (\ud \cdot \x_t)^2 \right\},
\qquad \mbox{hence} \qquad \uhat_t = \G_t^\dagger \B_{t-1}\,,
\end{equation}
where the closed-form expression corresponds to~\eqref{eqn:nlridgeadaptedcf}.
It then predicts $\yhat_t = \uhat_t \cdot \x_t$ (and as already indicated after~\eqref{eq:nlridge},
no clipping can take place as $B$ is unknown to the learner).

Note that no parameter requires to be tuned in this case, which can be a true relief.

\begin{rema}
The traditional bound for the
non-linear ridge regression forecaster blows up
when the regularization parameter is set as $\lambda = 0$
(see Section~\ref{sec:NLVovk}) but, perhaps surprisingly,
an ad hoc analysis could be performed here---see Theorem~\ref{thm:nlridgesequenceadapted}.
It provides a new understanding of this well-known non-linear regression algorithm:
the regularization term $\lambda \norm{\ud}^2$ in its defining equation~\eqref{eq:nlridge}
is not so useful, while the seemingly harmless regularization term $(\ud \cdot \x_t)^2$ therein
is crucial.
\end{rema}

The theorem below follows from a combination of arguments all already present in the literature,
namely, \cite[Theorem~3.2]{FoWa03}, \citet[Lemma~D.1]{CBCG05}, \citet[Theorem~4 of Appendix~D]{Luo16},
with a slightly more careful analysis at only one small point in the proof of the latter;
see details in Appendix~\ref{apd:nlridgeadaptatedproof}.
The proof is actually based on the proof of Theorem~\ref{thm:nlridge}
but requires adaptations to account for the fact that $\uhat_t$
is defined in~\eqref{eq:ridgeNL0} in terms of a possibly non-invertible matrix $\G_t$.
There are strong links between the results of Theorem~\ref{thm:nlridgesequenceadapted}
and Theorem~\ref{thm:nlridge}; see Remark~\ref{rk:finitelymany} below.

The result of Theorem~\ref{thm:nlridgesequenceadapted} is not that
straightforward, and in particular, some tricks
that were suggested to us when presenting this work, e.g.,
neglecting finitely many rounds till the Gram matrix is full rank
(if this ever happens), would probably work but would lead to an even larger
constant term. Generally speaking, neglecting finitely many rounds may
have important side-effects, see an illustration in Remark~\ref{rk:finitelymany}.

	\begin{theorem}
		\label{thm:nlridgesequenceadapted} For all $T \geq 1$, for all sequences $\x_1,\ldots,\x_T \in \R^d$ and all $\y_1,\ldots,y_T \in [-B,B]$, the non-linear ridge regression algorithm with $\lambda = 0$ as in~\eqref{eq:ridgeNL0} achieves the uniform regret bound
        \[
        \sup_{\ud \in \R^d}
		\regret_T(\ud)
\leq B^2 \sum_{t=1}^T \transpose{\x}_t \G_t^\dagger \x_t \leq
B^2 \sum_{k=1}^{r_T} \log \bigl(\eigen_k(\G_T) \bigr) + B^2 \sum_{t \in [\![1, T]\!] \cap \jump} \log \! \left( \frac{1}{\eigen_{r_t}(\G_{t})} \right) + r_T B^2
        \]
where $r_t$ and $\lambda_k$ are defined in Notation~\ref{not1}, and where the set $\jump$ contains $r_T$ rounds,
given by the smallest $s \geq 1$ such that $\x_s$ is not null,
and all the $s \geq 2$ for which $\rank(\G_{s-1}) \ne \rank(\G_s)$.
	\end{theorem}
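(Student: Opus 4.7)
The plan is to establish the two inequalities in the statement separately. The first, $\sup_{\ud \in \R^d} \regret_T(\ud) \leq B^2 \sum_{t=1}^T \transpose{\x}_t \G_t^\dagger \x_t$, adapts the analysis of Theorem~\ref{thm:nlridge} to the pseudo-inverse setting $\lambda = 0$. The second controls the resulting leverage-score sum by splitting rounds into those in $\jump$ and the others, and by telescoping a determinantal identity.

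For the first inequality, I first note that $\image{\G_T}$ contains every $\x_t$ with $t \leq T$, so $\regret_T(\ud)$ is unaffected by adding to $\ud$ any element of $\ker(\G_T)$; hence $\sup_{\ud \in \R^d} \regret_T(\ud) = \sup_{\ud \in \image{\G_T}} \regret_T(\ud)$. Restricted to this subspace, one may rerun the Vovk--Azoury--Warmuth telescoping from the proof of Theorem~\ref{thm:nlridge} with the matrix $\A_t = \lambda \Id + \G_t$ replaced by $\G_t$ and its inverse by the Moore--Penrose pseudo-inverse $\G_t^\dagger$. Because $\B_{t-1}$ and $\x_t$ both lie in $\image{\G_t}$, all pseudo-inversions are legitimate and the per-round identities go through; the $\lambda \norm{\ud}^2$ regularization term vanishes and yields exactly the stated leverage-score bound.

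For the second inequality, the first ingredient is the elementary leverage-score bound $\transpose{\x}_t \G_t^\dagger \x_t \leq 1$: writing the $d \times t$ matrix $\X_t = [\x_1, \ldots, \x_t]$ so that $\x_t = \X_t \ev_t$ (with $\ev_t$ the $t$-th canonical basis vector of $\R^t$), one has $\transpose{\x}_t \G_t^\dagger \x_t = \transpose{\ev}_t \bigl(\transpose{\X}_t (\X_t \transpose{\X}_t)^\dagger \X_t\bigr) \ev_t$, which is a diagonal entry of an orthogonal projector and therefore in $[0,1]$. The second ingredient is a determinantal telescoping: for each $t \notin \jump$ with $\x_t \ne 0$ one has $\rank(\G_{t-1}) = \rank(\G_t)$ and $\x_t \in \image{\G_{t-1}}$, so restricting $\G_{t-1}, \G_t$ to their common image and applying Sherman--Morrison gives
\[
1 - \transpose{\x}_t \G_t^\dagger \x_t = \frac{\det^{(r_{t-1})}(\G_{t-1})}{\det^{(r_t)}(\G_t)}\,,
\]
where $\det^{(r)}$ denotes the product of the $r$ non-zero eigenvalues; combining with $1-u \leq -\log u$ gives $\transpose{\x}_t \G_t^\dagger \x_t \leq \log \det^{(r_t)}(\G_t) - \log \det^{(r_{t-1})}(\G_{t-1})$.

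I then partition $\{1, \ldots, T\}$ into the $r_T$ maximal intervals on which $\rank(\G_t)$ is constant, delimited by the jump times $\tau_1 < \ldots < \tau_{r_T}$ of $\jump$: inside each block I telescope the determinantal bound, and at each $\tau_i$ I use the leverage bound $\leq 1$, contributing $r_T$ in total. The main obstacle is then to glue the $\log \det^{(i)}$ terms across block boundaries, where the effective dimension jumps from $i-1$ to $i$ and naive telescoping breaks. I handle this by decomposing $\x_{\tau_i} = \mathbf{p}_i + \bz_i$ with $\mathbf{p}_i \in \image{\G_{\tau_i - 1}}$ and $\bz_i$ orthogonal to $\image{\G_{\tau_i - 1}}$ (non-zero by definition of a rank jump, with the convention $\bz_1 = \x_{\tau_1}$); a Schur-complement computation on the restriction of $\G_{\tau_i}$ to $\image{\G_{\tau_i - 1}} \oplus \spn(\bz_i)$ gives the clean identity
\[
\det^{(i)}(\G_{\tau_i}) = \det^{(i-1)}(\G_{\tau_i - 1}) \cdot \norm{\bz_i}^2\,.
\]
Substituting this into the telescoped sum collapses the boundary terms and leaves a remainder $\log \det^{(r_T)}(\G_T) + \sum_{i=1}^{r_T} \log(1/\norm{\bz_i}^2) + r_T$. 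Finally, the PSD-monotonicity $\lambda_k(\G_{\tau_i}) \geq \lambda_k(\G_{\tau_i - 1})$ for $k \leq i-1$, combined with the identity above, forces $\norm{\bz_i}^2 \geq \lambda_{r_{\tau_i}}(\G_{\tau_i})$, which converts the remainder into the claimed $\sum_{t \in \jump} \log(1/\lambda_{r_t}(\G_t))$ term.
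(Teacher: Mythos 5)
Your argument is correct and shares the paper's overall architecture, but differs at two junctures. For the first inequality, the paper passes to the limit $\lambda\to 0^+$ in the first inequality of the $\lambda>0$ analysis (rewritten as~\eqref{eqn:firstthm3_1}) and identifies $\lim_{\lambda\to 0}\transpose{\x}_t(\lambda\Id+\G_t)^{-1}=\transpose{\x}_t\G_t^\dagger$ via Corollary~\ref{cor:pseudoinverse}~\ref{eqn:mplimit}; you instead rerun the potential-function telescoping directly with $\G_t^\dagger$ in place of $(\lambda\Id+\G_t)^{-1}$, which is also valid since, as you note, $\B_{t-1},\x_t\in\image{\G_t}$ ensures $\G_t\G_t^\dagger$ acts as the identity on every quantity that appears, so the per-round algebra survives; the limit argument is marginally safer because it reuses the already-proved identity verbatim. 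For the second inequality, both proofs split the rounds into $\jump$ (leverage bounded by~$1$, giving $r_T$) and its complement (where $\transpose{\x}_t\G_t^\dagger\x_t=1-D_{t-1}/D_t$, with $D_t$ the product of positive eigenvalues of $\G_t$), then telescope after $1-u\leq-\log u$ and absorb the boundary terms using eigenvalue monotonicity. The difference lies in how that absorption is carried out: the paper writes the telescoped sum as $\ln(D_T/D_1)$ plus a sum over $t\in\jump$ with $t\geq 2$ of $\ln(D_{t-1}/D_t)$, and bounds each leftover ratio directly via $D_{t-1}/D_t\leq 1/\eigen_{r_t}(\G_t)$, whereas you derive the exact Schur-complement identity $D_{\tau_i}=D_{\tau_i-1}\norm{\bz_i}^2$ for the determinant jump (with $\bz_i$ the component of $\x_{\tau_i}$ orthogonal to $\image{\G_{\tau_i-1}}$) and only then relax it through $\norm{\bz_i}^2\geq\eigen_{r_{\tau_i}}(\G_{\tau_i})$, landing in the same place. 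Your Schur identity is correct and gives a sharper interpretation of what the boundary terms measure, but that extra precision is immediately discarded; the paper's one-line inequality suffices. Finally, your Sherman--Morrison-on-a-restriction argument for the non-jump rounds is precisely the content of the paper's Lemma~\ref{lem:mplinalg}, which is stated uniformly so as to also cover the rank-increasing rounds (yielding the value $1$ there).
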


We recall in Appendix~\ref{sec:GramMatr} that $\rank(\G_t)$ is a non-decreasing sequence,
with increments of~$1$, hence the claimed cardinality $r_T$ of $\jump$, and the fact
that $\eigen_{r_t}(\G_{t}) > 0$ for all $t \in \jump$.

Note that the regret bound obtained is scale invariant, which is natural and was expected,
as the forecaster also is; to see why this is the case, note that it only involves
quantities~$\eigen_k(\G_T)/\eigen_{r_t}(\G_{t})$.

The same (standard) arguments as the ones at the end of the proof of Corollary~\ref{cor:RNL-standard}
show the following consequence of this bound (which is scale invariant as far as
multiplications of the features by scalar factors only are concerned):
for all $X > 0$, for all sequences $\x_1,\x_2,\ldots$ of features with $\norm{\x_t} \leq X$,
\[
\sup_{\ud \in \R^d} \regret_T(\ud)
\leq d B^2 \log T + \underbrace{d B^2 + B^2 \sum_{t \in [\![1, T]\!] \cap \jump} \log \! \left( \frac{X^2}{\eigen_{r_t}(\G_{t})} \right)}_{\text{this is our $\cO_T(1)$ here}}\,.
\]
Note that the $\cO_T(1)$ term stops increasing once the matrix $\G_T$ is full rank:
then, for rounds $T' \geq T$, only the leading term increases to $d B^2 \log T'$.
But this $\cO_T(1)$ can admittedly be large and blows up as all sequences of feature vectors
are considered, just as in the bound of Corollary~\ref{cor:RNL-standard}. The dependency
on the eigenvalues is however slightly improved to a logarithmic one, here.

The bound of Theorem~\ref{thm:nlridgesequenceadapted} thus still
remains somewhat weak, hence our main open question.

\subsection{Open question---Double uniformity over $\R^d$: for the $\ud$ and for the $\x_t$}
\label{sec:openquestion}

For the time being, no $d B^2 \ln(T) + \cO_T(1)$ regret bound simultaneously uniform over
all comparison vectors $\ud \in \R^d$ and over all features $\x_t$ with $\norm{\x_t} \leq X$ and bounded observations $y_t \in [-B,B]$
is provided in the case of sequentially revealed features (what we called worst-case uniform regret bounds). Indeed,
the bound of Theorem~\ref{thm:nlridge} is not uniform over the comparison vectors $\ud \in \R^d$.
The bound of Corollary~\ref{cor:RNL-standard} is of order $2 d B^2 \ln(T)$
(and is not uniform over even bounded feature vectors).
The bound of Theorem~\ref{thm:nlridgeadapted} enjoys the double uniformity and is of proper order,
but only holds for beforehand-known features.
The bound of Theorem~\ref{thm:nlridgesequenceadapted} is uniform over the comparison vectors $\ud \in \R^d$,
is of proper order $d B^2 \ln(T)$ and holds in the sequential case, but is not uniform over
bounded features $\x_t$ (its remainder term can be large).

The lower bound of Theorem~\ref{thm:lowerbound} (and earlier lower bounds by
\citealp{Vovk01} and \citealp{TW00}) are proved in the case of feature vectors that are initially revealed to the regression strategy.
The open question is therefore whether we can improve the lower bound and make it larger for strategies that only discover the features on the fly, or if a doubly uniform regret upper bound of $d B^2 \ln(T) + \cO_T(1)$ over the $\ud$ and the $\x_t,\,y_t$ is also possible in the case of sequentially revealed features. For the lower bound, it seems that choosing random feature vectors that are independent over time might not be a good idea, since the final normalized Gram matrix $\G_T/T$ may be concentrated around its expectation $\G$, and the regression strategy might use the possibly known $\G$ to transform the features $\x_t$ as in Theorem~\ref{thm:nlridgeadapted}. Instead, choosing random features $\x_t$ that are dependent over time might make the task of predicting the final Gram matrix $\G_T$ virtually impossible, and might help to improve the lower bound. Alternatively, we could construct features $\x_t$ in a truly sequential manner, as functions of the strategy's past predictions, so as to annoy the regression strategy.

\subsection{Some further technical remarks}

We provide details on two claims issued above.

\begin{rematitre}{Links between Theorem~\ref{thm:nlridgesequenceadapted} and Theorem~\ref{thm:nlridge}}
\label{rk:finitelymany}
Assume that
we use the non-linear ridge regression algorithm with $\lambda = 0$ as in~\eqref{eq:ridgeNL0}
but feed it first with $d$ warm-up feature vectors
$\x_{-t} = (0,\ldots,0, \sqrt{\lambda}, 0, \ldots, 0)$, where the $\sqrt{\lambda}$ is
in position $t \in \{1,\ldots,d\}$, and that the observations are $y_{-t} = 0$.
Then for each $\ud \in \R^d$, a cumulative loss of $\lambda \norm{\ud}^2$ is suffered,
and to neglect these $d$ additional rounds in the regret bound obtained by
Theorem~\ref{thm:nlridgesequenceadapted}, we need to add a $\lambda \norm{\ud}^2$
term to it. As all terms corresponding to the new eigenvalues introduced
$\eigen_{r_t}(\G_{t})$ are equal to $\lambda$, given the choice of these
warm-up features, we are thus essentially back to the bound of Theorem~\ref{thm:nlridge}.
\end{rematitre}

\begin{rematitre}{How we came up with the forecaster~\eqref{eq:ridgeNL0}}
\label{rk:why}
A natural attempt to transform the forecaster~\eqref{eq:nlridgeadapted} designed
for the case of beforehand-known features into a fully sequential algorithm
is to replace the matrix $\G_T$ that is unknown at the beginning of round $t$ by its sequential
estimate $\G_{t}$ and to regularize at time $t$
with $(\ud \cdot \x_t)^2 + \lambda \norm{\ud}_{\G_{t}}^2$ instead of $(\ud \cdot \x_t)^2 + \lambda \norm{\ud}_{\G_T}^2$
as in~\eqref{eq:nlridgeadapted}.
However, in this case, the closed-form expression for the vector $\uhat_t$ is
$\smash{\uhat_t = \G_t^\dagger \B_{t-1}/(1+\lambda)}$, that is, the $\lambda$ only acts as a
multiplicative bias to the vector otherwise considered in~\eqref{eq:ridgeNL0}.
The analysis we followed led to a regret bound increasing in $\lambda$, so that we finally picked $\lambda = 0$
and ended up with our non-linear ridge regression algorithm with $\lambda = 0$ as in~\eqref{eq:ridgeNL0}.
\end{rematitre}

\appendix
\section{Details on the proof of Theorem~\ref{thm:lowerbound}}
\label{sec:vT}

\subsection*{Details on getting~\eqref{eq:prendreesp}}

By exchanging an expectation and an infimum,
the expectation of the uniform regret of any fixed forecaster considered can be bounded as
    \begin{equation}
    \label{eq:bdint01}
    \E_{\thetav^\star} \! \left[ \sup_{\ud \in \R^d} \regret_T(\ud) \right]
    \geq \sum_{t=1}^T \E_{\thetav^\star} \Bigl[ (Y_t - \yhat_t)^2 \Bigr] -
    \inf_{\ud \in \R^d} \sum_{t=1}^T \E_{\thetav^\star} \Bigl[ (Y_t - \ud \cdot \ev_{J_t})^2 \Bigr]\,.
    \end{equation}
    Since $\yhat_t$ is measurable w.r.t.\ $\cF_{t-1}$, the $\sigma$--algebra generated by the information available at
    the beginning of round $t$, namely, $J_1,\ldots,J_T$ and $Y_1,\ldots,Y_{t-1}$, and since
    $Y_t$ is distributed, conditionally on $\cF_{t-1}$  according to a Bernoulli distribution with parameter $\theta^\star_{J_t}$,
    a conditional bias--variance decomposition yields
    \begin{align*}
    \E_{\thetav^\star} \Bigl[ (\yhat_t - Y_t)^2 \, \big| \, \cF_{t-1} \Bigr] & =
    (\yhat_t - \theta^\star_{J_t})^2
    + \E_{\thetav^\star} \Bigl[ (Y_t - \theta^\star_{J_t})^2 \, \big| \, \cF_{t-1} \Bigr] \\
    & = \bigl(\hatu_{J_t,t} - \theta^\star_{J_t} \bigr)^2
    + \theta^\star_{J_t} \bigl( 1-\theta^\star_{J_t} \bigr)\,,
    \end{align*}
    where we also used that by construction, $\yhat_t = \uhat_t \cdot \ev_{J_t} = \hatu_{J_t,t}$.
    Similarly, for all $\ud \in \R^d$,
    \[
    \E_{\thetav^\star} \Bigl[ (Y_t - \ud \cdot \ev_{J_t})^2 \, \big| \, \cF_{t-1} \Bigr] =
    \bigl( u_{J_t} - \theta^\star_{J_t} \bigr)^2
    + \theta^\star_{J_t} \bigl( 1-\theta^\star_{J_t} \bigr)\,.
    \]
    By the tower rule and since the variance terms $\theta^\star_{J_t} \bigl( 1-\theta^\star_{J_t} \bigr)$
    cancel out, we thus proved that
    \begin{align*}
    \E_{\thetav^\star} \! \left[ \sup_{\ud \in \R^d} \regret_T(\ud) \right]
    & \geq \sum_{t=1}^T \E_{\thetav^\star} \Bigl[ (\yhat_t - Y_t)^2 \Bigr] -
    \inf_{\ud \in \R^d} \sum_{t=1}^T \E_{\thetav^\star} \Bigl[ (Y_t - \ud \cdot \ev_{J_t})^2 \Bigr] \\
    & = \sum_{t=1}^T \E_{\thetav^\star} \Bigl[ \bigl( \hatu_{J_t,t} - \theta^\star_{J_t} \bigr)^2 \Bigr]
    - \inf_{\ud \in \R^d} \sum_{t=1}^T  \E_{\thetav^\star} \Bigl[ \bigl( u_{J_t} - \theta^\star_{J_t} \bigr)^2 \Bigr] \\
    & = \sum_{t=1}^T \E_{\thetav^\star} \Bigl[ (\hatu_{J_t,t} - \theta^\star_{J_t})^2 \Bigr]\,.
    \end{align*}
    Now, by resorting to the tower rule again, integrating over $J_t$ conditionally on $Y_1,\ldots,Y_{t-1}$
    and $J_1,\ldots,J_{t-1},J_{t+1},\ldots,J_T$, we get
    \begin{equation}
    \label{eq:preuveborneinfpartielle}
    \E_{\thetav^\star} \! \left[ \sup_{\ud \in \R^d} \regret_T(\ud) \right]
    \geq \sum_{t=1}^T \E_{\thetav^\star} \Bigl[ (\hatu_{J_t,t} - \theta^\star_{J_t})^2 \Bigr]
    = \sum_{t=1}^T \frac{1}{d} \, \E_{\thetav^\star} \Bigl[ \bigl\Arrowvert \uhat_t - \theta^\star \bigr\Arrowvert^2_2 \Bigr]\,.
    \end{equation}
    We now show that each term in the sum is larger than something of the order of $d/t$.
    This order of magnitude $d/t$ is the parametric rate of optimal estimation; indeed, due to the randomness of the $J_s$, over $t$ periods, each component is used about
    $t/d$ times, while the rate of convergence in quadratic error of any $d$--dimensional estimator
    based on $\tau = t/d$ unbiased i.i.d.\ observations is at best $d/\tau = d^2/t$. Taking
    into account the $1/d$ factor gets us the claimed $d/t$ rate.
    The next steps (based on the van Trees inequality) transform this intuition into formal statements.

\subsection*{Conclusion of the proof, given the application of the van Trees inequality}

    We resume at~\eqref{eq:preuveborneinfpartielle} and consider a prior $\pi$ over the $\theta^\star \in [0,1]^d$.
    Since an expectation is always smaller than a supremum, we have first, given the defining equation~\eqref{eq:defregretstar} of
    $\regret^\star_{T,\,[0,1]}$,
    \begin{align*}
    \regret^\star_{T,\,[0,1]} & \geq \inf_{\text{forecasters}} \
    \bigintsss_{[0,1]^d} \E_{\theta^\star} \! \left[ \sup_{\ud \in \R^d} \regret_T(\ud) \right] \d\pi(\theta^\star) \\
    & \geq \inf_{\text{forecasters}} \ \sum_{t=1}^T
    \bigintsss_{[0,1]^d} \frac{1}{d} \, \E_{\thetav^\star} \Bigl[ \bigl\Arrowvert \uhat_t - \theta^\star \bigr\Arrowvert^2_2 \Bigr] \d\pi(\theta^\star)\,,
    \end{align*}
    where the second inequality follows
    by mixing both sides of~\eqref{eq:preuveborneinfpartielle} according to $\pi$.
    Now, an immediate application of the (multi-dimentional) van Trees inequality with a
    Beta($\alpha,\alpha$) prior $\pi$ shows that for all forecasters, all $t \geq 1$ and $\alpha \geq 3$,
    \[
    \bigintsss_{[0,1]^d} \frac{1}{d} \, \E_{\theta^\star} \Bigl[ \bigl\Arrowvert \uhat_t - \theta^\star \bigr\Arrowvert^2_2 \Bigr] \d\pi(\theta^\star) \geq \frac{d}{4(t-1) + 2(t-1)/(\alpha-1) + 16 d \alpha}\,,
    \]
    see Lemma~\ref{lm:vT} below. We thus proved
    \begin{align*}
    \nonumber
    & \regret^\star_{T,\,[0,1]} \geq \sum_{t=0}^{T-1} \frac{d}{\bigl( 4 + 2/(\alpha-1) \bigr) t + 16 d \alpha}
    \geq d \bigintsss_0^{T} \frac{1}{\bigl( 4 + 2/(\alpha-1) \bigr) t + 16 d \alpha} \d t \\
    & = \frac{d}{4 + 2/(\alpha-1)} \ln \frac{\bigl( 4 + 2/(\alpha-1) \bigr) T + {16 d \alpha}}{16 d \alpha} \\
    & \geq \frac{d}{4 + 2/(\alpha-1)} \ln \frac{ 4 T}{16 d \alpha} \\
    & = \frac{d}{4 + 2/(\alpha-1)} \bigl(\ln T -  \ln (4d\alpha)\bigr)\,,
    \end{align*}
    which we lower bound in a crude way by resorting to $1/(1+u) \geq 1-u$
    and by taking $\alpha$ such that $\alpha-1 = \ln T$; this is where our condition $T \geq 8 > \e^2$ is used, to ensure that $\alpha \geq 3$. We also use that since $T \geq \e^2$, we have $1 \leq (\ln T)/2$
    thus $4d\alpha \leq 4d(1+\ln T) \leq 6d\log T$.
    We get
    \begin{align}
    \nonumber
    \regret^\star_{T,\,[0,1]} & \geq \frac{d}{4} \biggl( \underbrace{1 - \frac{1}{2(\alpha-1)}}_{\geq 0} \biggr)
    \bigl( \ln T - \ln( 6d\log T) \bigr) \\
    \label{eq:bdint01-bis}
    & \geq \frac{d}{4} \biggl( 1 - \frac{1}{2 \ln T} \biggr) ( \ln T  - \ln(6d) - \ln \ln T)
    \geq \frac{d}{4} \bigl( \ln T - (3+\ln d) - \ln \ln T \bigr)\,.
    \end{align}
    The factor $3$ above corresponds to $1/2 + \ln 6 \leq 3$.
    So, we covered the case of $\regret^\star_{T,\,[0,1]}$ and now turn
    to $\regret^\star_{T,\,[-B,B]}$ for a general $B > 0$.

\subsection*{Going from $\regret^\star_{T,\,[0,1]}$ to $\regret^\star_{T,\,[-B,B]}$}

    To get a lower bound of exact order $d \ln T$,
    that is, to get rid of the annoying multiplicative factor of $1/4$,
    we proceed as follows. With the notation above, $Z_t = 2B(Y_t - 1/2)$ lies in $[-B,B]$.
    Denoting by $\hat z_t$ the forecasts output by a given forecaster sequentially fed with the
    $(Z_s,\ev_{J_s})$, we have
    \[
    (\hat z_t - Z_t)^2 = 4B^2 (\yhat_t - Y_t)^2 \qquad \mbox{where the} \qquad
    \yhat_t = \frac{\hat z_t + 1/2}{2B}
    \]
    also correspond to predictions output by a legitimate forecaster, and
    \[ \!\!\!\!
    \inf_{\vd \in \R^d} \sum_{t=1}^T \E \Bigl[ (Z_t - \vd \cdot \ev_{J_t})^2 \Bigr] =
    4B^2 \inf_{\vd \in \R^d} \sum_{t=1}^T \E \Biggl[ \biggl(Y_t - \frac{1}{2} - \frac{\vd \cdot \ev_{J_t}}{2B} \biggr)^{\!\! 2} \Biggr]
    =  4B^2 \inf_{\ud \in \R^d} \sum_{t=1}^T \E \Bigl[ (Y_t - \ud \cdot \ev_{J_t} )^2 \Bigr]
    \]
    by considering the transformation $\vd \leftrightarrow \ud$ given by
    $u_j = v_j/(2B) - 1/2$. (We use here that the sum of the components of the
    $\ev_{J_t}$ equal $1$.)
    We thus showed that $\regret^\star_{T,\,[-B,B]}$ is larger
    than $4B^2$ times the lower bound~\eqref{eq:bdint01-bis} exhibited on~\eqref{eq:bdint01},
    which concludes the proof.

\subsection*{Details on the application of the van Trees inequality}

The van Trees inequality is a Bayesian version of the {C}ram{\'e}r-{R}ao bound,
but holding for any estimator (not only the unbiased ones); see \citet[Section~4]{GiLe95}
for a multivariate statement (and refer to \citealp{vT68} for its first statement). \medskip

Recall that we denoted above by $\mathcal{P}_{\theta^\star}$ the distribution of the sequence of pairs
$(J_t,Y_t)$, with $1 \leq t \leq T$, considered in Section~\ref{sec:NLVovk-COLS} for a given $\theta^\star \in [0,1]^d$.
We also considered the family $\mathcal{P}$
of these distributions and thus, for clarity, indexed all expectations $\E$
by the underlying parameter $\theta^\star$ at hand.
We introduce a product of independent Beta($\alpha,\alpha$) distributions
as a prior $\pi$ on the $\theta^\star \in [0,1]^d$; its density
with respect to the Lebesgue measure equals
\[
\beta_{\alpha,\alpha}^{(d)} (t_1,\ldots,t_d) \longmapsto
\beta_{\alpha,\alpha}(t_1) \cdots \beta_{\alpha,\alpha}(t_d)\,,
\qquad \mbox{where} \qquad
\beta_{\alpha,\alpha} : t \mapsto \frac{\Gamma(2\alpha)}{\bigl( \Gamma(\alpha) \bigr)^2} \,
t^{\alpha-1}(1-t)^{\alpha-1}\,.
\]
The reason why Beta distributions are considered is
because of the form of the Fisher information of the $\mathcal{P}$ family,
see calculations~\eqref{eq:whyBeta} below. \medskip

The multivariate van Trees inequality ensures that for all estimators $\uhat_t$, that is,
for all random variables which are measurable functions of $J_1,\ldots,J_T$ and $Y_1,\ldots,Y_{t-1}$,
we have
\begin{equation}
\label{eq:vT}
\bigintsss_{[0,1]^d}
\E_{\theta^\star} \Bigl[ \bigl\Arrowvert \uhat_t - \theta^\star \bigr\Arrowvert^2_2 \Bigr]
\,\, \beta_{\alpha,\alpha}^{(d)} (\theta^\star)
\, \d\theta^\star
\geq \frac{(\Tr \Id)^2}{\displaystyle{\Tr \cI\bigl( \beta_{\alpha,\alpha}^{(d)} \bigr) +
\int_{[0,1]^d} \bigl( \Tr \cI(\theta^\star) \bigr) \, \beta_{\alpha,\alpha}^{(d)} (\theta^\star)
\, \d\theta^\star}}\,,
\end{equation}
where $\d\theta^\star$ denotes the integration w.r.t.\ Lebesgue measure, $\Tr$ is the trace operator,
$\cI(\theta^\star)$ stands for the Fisher information of the family $\mathcal{P}$ at
$\theta^\star$, see~\eqref{eq:fisher},
while each component $(i,i)$ of the other matrix in the denominator is given by
\[
\cI\bigl( \beta_{\alpha,\alpha}^{(d)} \bigr)_{i,i}
\eqdef \bigintss_{[0,1]^d} \biggl( \frac{\partial \beta_{\alpha,\alpha}^{(d)}}{\partial \theta^\star_i} (\theta^\star)
\biggr)^{\!\! 2} \frac{1}{\beta_{\alpha,\alpha}^{(d)}(\theta^\star)}
\, \d\theta^\star\,,
\]
which may equal $+\infty$ (in which case the lower bound is void).
There are conditions for the inequality to be satisfied, we detail them in the proof
of the lemma below.

\begin{lemma}
\label{lm:vT}
When the family $\mathcal{P}$ is equipped with a prior given by a
product of independent Beta($\alpha,\alpha$) distributions, where $\alpha \geq 3$,
it follows from the van Trees inequality and from simple calculations that
\[
\bigintsss_{[0,1]^d}
\E_{\theta^\star} \Bigl[ \bigl\Arrowvert \uhat_t - \theta^\star \bigr\Arrowvert^2_2 \Bigr]
\,\, \beta_{\alpha,\alpha}^{(d)} (\theta^\star)
\, \d\theta^\star
\geq \frac{d^2}{16 d \alpha + 4(t-1) + 2(t-1)/(\alpha-1)}\,.
\]
\end{lemma}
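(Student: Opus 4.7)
The plan is to instantiate the multivariate van Trees inequality~\eqref{eq:vT}: the numerator equals $(\Tr \Id)^2 = d^2$, so it remains to upper bound the two terms in the denominator, namely $\Tr \cI(\beta_{\alpha,\alpha}^{(d)})$ (the trace of the Fisher information of the prior) and $\int_{[0,1]^d} \Tr \cI(\theta^\star)\,\beta_{\alpha,\alpha}^{(d)}(\theta^\star)\,\d\theta^\star$ (the averaged trace of the Fisher information of the model).

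For the prior's Fisher information, the product structure of $\beta_{\alpha,\alpha}^{(d)}$ makes the matrix diagonal with identical entries, so $\Tr \cI(\beta_{\alpha,\alpha}^{(d)}) = d\,\cI(\beta_{\alpha,\alpha})$ for the one-dimensional analogue. I would then compute
\[
\cI(\beta_{\alpha,\alpha}) = (\alpha-1)^2\,\frac{\Gamma(2\alpha)}{\Gamma(\alpha)^2} \int_0^1 (1-2t)^2\,t^{\alpha-3}(1-t)^{\alpha-3}\,\d t
\]
by expanding $(1-2t)^2 = (1-t)^2 - 2t(1-t) + t^2$ and reducing to the Beta integrals $B(\alpha-2,\alpha)$, $B(\alpha,\alpha-2)$, and $B(\alpha-1,\alpha-1)$; standard Gamma-function manipulations yield the closed form $\cI(\beta_{\alpha,\alpha}) = 4(\alpha-1)(2\alpha-1)/(\alpha-2)$. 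An elementary quadratic inequality (valid exactly for $\alpha \geq 3$) then gives $\cI(\beta_{\alpha,\alpha}) \leq 16\alpha$, so that $\Tr \cI(\beta_{\alpha,\alpha}^{(d)}) \leq 16\,d\,\alpha$.

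For the model's Fisher information, the observations on which $\uhat_t$ depends are $(J_1,\ldots,J_T,Y_1,\ldots,Y_{t-1})$, and since the $J_s$ are independent of $\theta^\star$ they contribute nothing to the score. The score decomposes as $\partial \ell / \partial \theta^\star_i = \sum_{s \leq t-1,\,J_s = i} \bigl[ Y_s/\theta^\star_i - (1-Y_s)/(1-\theta^\star_i) \bigr]$, and a conditional bias-variance computation given $J_1,\ldots,J_{t-1}$ (using that at each round only one coordinate is active, so cross terms vanish) yields a diagonal matrix with $\cI(\theta^\star)_{ii} = (t-1)/\bigl(d\,\theta^\star_i(1-\theta^\star_i)\bigr)$; the factor $(t-1)/d$ is the expected number of rounds $s \leq t-1$ for which $J_s = i$. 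Integrating the trace against the product prior factorizes across coordinates, and a one-line Beta/Gamma computation gives
\[
\int_{[0,1]^d} \Tr \cI(\theta^\star)\,\beta_{\alpha,\alpha}^{(d)}(\theta^\star)\,\d\theta^\star = (t-1) \int_0^1 \frac{\beta_{\alpha,\alpha}(t)}{t(1-t)}\,\d t = (t-1)\,\frac{2(2\alpha-1)}{\alpha-1} = (t-1)\!\left(4 + \frac{2}{\alpha-1}\right).
\]
Summing the two contributions produces the denominator $16\,d\,\alpha + 4(t-1) + 2(t-1)/(\alpha-1)$ of the lemma's lower bound.

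The main obstacle is essentially bookkeeping: the Gamma-function manipulations must produce exactly the constants appearing in the statement, and the role of the hypothesis $\alpha \geq 3$ must be tracked carefully---it is needed both for convergence of $\int_0^1 t^{\alpha-3}(1-t)^{\alpha-3}\,\d t$ (which only requires $\alpha > 2$) and for the algebraic bound $4(\alpha-1)(2\alpha-1)/(\alpha-2) \leq 16\alpha$, whose threshold root is close to $2.69$. The regularity conditions on the prior required by the van Trees inequality, namely vanishing of $\beta_{\alpha,\alpha}^{(d)}$ at the boundary of $[0,1]^d$, are automatic as soon as $\alpha > 1$, so they impose no additional constraint.
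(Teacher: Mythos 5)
Your proposal is correct and follows essentially the same route as the paper: instantiate the multivariate van Trees inequality~\eqref{eq:vT} with numerator $d^2$, compute the model's Fisher information $\cI(\theta^\star)_{i,i} = (t-1)/\bigl(d\,\theta^\star_i(1-\theta^\star_i)\bigr)$, integrate its trace against the Beta prior to get $4(t-1)+2(t-1)/(\alpha-1)$, and compute the prior's Fisher information $\cI(\beta_{\alpha,\alpha}) = 4(2\alpha-1)(\alpha-1)/(\alpha-2) \leq 16\alpha$ for $\alpha \geq 3$. The only cosmetic difference is in evaluating $\int_0^1 (1-2t)^2\,t^{\alpha-3}(1-t)^{\alpha-3}\,\d t$: you expand the quadratic into three Beta integrals, while the paper recognizes the integral as the normalizing constant of a Beta$(\alpha-2,\alpha-2)$ density times $4\,\mathrm{Var}(Z_{\alpha-2}) = 1/(2\alpha-3)$; both give the same closed form.
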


\begin{proof}
We denote by
\[
f_{\theta^\star} : (j_1,\ldots,j_T,\,y_1,\ldots,y_{t-1}) \in \{1,\ldots,d\}^T \times \{0,1\}^{t-1} \longmapsto
\frac{1}{d^T} \prod_{s=1}^{t-1} {\theta^\star_{j_s}}^{y_s} (1-\theta^\star_{j_s})^{1-y_s}
\]
the density of $\mathcal{P}_{\theta^\star}$ w.r.t.\ to the counting measure $\mu$ on
$\{1,\ldots,d\}^T \times \{0,1\}^{t-1}$.

The sufficient conditions of \citet[Section~4]{GiLe95} for~\eqref{eq:vT} are met,
since on the one hand
$\beta_{\alpha,\alpha}^{(d)}$ is $C^1$--smooth, vanishes on the border of $[0,1]^d$,
and is positive on its interior, while on the other hand,
$\theta^\star \mapsto f_{\theta^\star}(j_1,\ldots,j_T,\,y_1,\ldots,y_{t-1})$ is $C^1$--smooth
for all $(j_1,\ldots,j_T,\,y_1,\ldots,y_{t-1})$, with,
for all $i \in \{1,\ldots,d\}$,
\[
L_i(\theta^\star) = \frac{\partial}{\partial \theta^\star_i} \ln f_{\theta^\star}(J_1,\ldots,J_T,\,Y_1,\ldots,Y_{t-1})
= \sum_{s=1}^{t-1} \left( \frac{Y_s}{\theta^\star_{J_s}} - \frac{1-Y_s}{1-\theta^\star_{J_s}} \right) \ind{\{J_s=i\}}
\]
being square integrable, so that the Fisher information matrix $\cI(\theta^\star)$ of the $\mathcal{P}$ model
at $\theta^\star$ exists and has a component $(i,i)$ given by
\begin{align}
\nonumber
\cI(\theta^\star)_{i,i} \eqdef
\E_{\theta^\star} \Bigl[ L_i(\theta^\star)^2 \Bigr]
& = (t-1) \,\,
\E_{\theta^\star} \! \Biggl[\biggl( \frac{Y_1}{\theta^\star_{J_1}} - \frac{1-Y_1}{1-\theta^\star_{J_1}} \biggr)^{\!\! 2} \ind{\{J_1=i\}} \Biggr] \\
\label{eq:fisher}
& = \frac{t-1}{d} \biggl( \frac{1}{\theta^\star_i} + \frac{1}{1-\theta^\star_i} \biggr)
= \frac{t-1}{d \, \theta^\star_i (1-\theta^\star_i)}\,,
\end{align}
and therefore, is such that $\theta^\star \mapsto \sqrt{\cI(\theta^\star)}$ is locally integrable
w.r.t.\ the Lebesgue measure. The second inequality in~\eqref{eq:fisher} is because $L_i(\theta^\star)$
is a sum of $t-1$ centered, independent and identically distributed variables,
while the third inequality is obtained by the tower rule, by first taking the conditional expectation
with respect to $J_1$.

We now compute all elements of the denominator of~\eqref{eq:vT}.
First, by symmetry and then by substituting~\eqref{eq:fisher},
\begin{align}
\nonumber
\lefteqn{\int_{[0,1]^d} \bigl( \Tr \cI(\theta^\star) \bigr) \, \beta_{\alpha,\alpha}^{(d)} (\theta^\star)
\, \d\theta^\star} \\
\nonumber
& = d \, \int_{[0,1]^d} \cI(\theta^\star)_{1,1} \, \beta_{\alpha,\alpha}^{(d)} (\theta^\star)
\, \d\theta^\star \\
\label{eq:whyBeta}
& = d \bigintsss_{[0,1]^d} \frac{t-1}{d \, \theta^\star_1 (1-\theta^\star_1)} \,\,
\frac{\Gamma(2\alpha)}{\bigl( \Gamma(\alpha) \bigr)^2} \, (\theta^\star_1)^{\alpha-1} (1-\theta^\star_1)^{\alpha-1}
\,\, \beta_{\alpha,\alpha}(\theta^\star_2) \, \cdots \,
\beta_{\alpha,\alpha}(\theta^\star_d)
\, \d\theta^\star \\
\nonumber
& = (t-1) \, \frac{\Gamma(2\alpha)}{\bigl( \Gamma(\alpha) \bigr)^2} \,
\int_{[0,1]} z^{\alpha-2} (1-z)^{\alpha-2} \d z
= (t-1) \, \frac{\Gamma(2\alpha)}{\bigl( \Gamma(\alpha) \bigr)^2}
\frac{\bigl( \Gamma(\alpha-1) \bigr)^2}{\Gamma\bigl(2(\alpha-1)\bigr)}\,,
\end{align}
where we used the expression of the density of the Beta($\alpha-1,\alpha-1$)
distribution for the last equality.
Using that $x \, \Gamma(x) = \Gamma(x+1)$ for all real numbers $x > 0$,
we finally get
\[
\int_{[0,1]^d} \bigl( \Tr \cI(\theta^\star) \bigr) \, \beta_{\alpha,\alpha}^{(d)} (\theta^\star)
\, \d\theta^\star
= \frac{(2\alpha-1)(2\alpha-2)}{(\alpha-1)^2} \, (t-1) = \frac{4\alpha - 2}{\alpha-1} \, (t-1)
= 4 (t-1) + \frac{2(t-1)}{\alpha-1}\,.
\]

Second, as far as the
$\Tr \cI\bigl( \beta_{\alpha,\alpha}^{(d)} \bigr)$ in~\eqref{eq:vT} is concerned,
because $\beta_{\alpha,\alpha}^{(d)}$ is a product of univariate distributions,
\[
\cI\bigl( \beta_{\alpha,\alpha}^{(d)} \bigr)_{i,i}
= \bigintss_{[0,1]^d} \biggl( \frac{\partial \beta_{\alpha,\alpha}^{(d)}}{\partial \theta^\star_i} (\theta^\star)
\biggr)^{\!\! 2} \frac{1}{\beta_{\alpha,\alpha}^{(d)}(\theta^\star)}
\, \d\theta^\star
= \bigintss_{[0,1]} \biggl( \frac{\partial \beta_{\alpha,\alpha}}{\partial z} (z) \biggr)^{\!\! 2} \frac{1}{\beta_{\alpha,\alpha}}(z) \, \d z\,,
\]
so that $\Tr \cI\bigl( \beta_{\alpha,\alpha}^{(d)} \bigr)$ equals $d$ times this value, that is,
$d$ times
\begin{align*}
& \bigintss_{[0,1]} \frac{\Gamma(2\alpha)}{\bigl( \Gamma(\alpha) \bigr)^2} \,
\frac{\bigl( (\alpha-1) \, z^{\alpha-2}(1-z)^{\alpha-1} - (\alpha-1) \, z^{\alpha-1}(1-z)^{\alpha-2}
\bigr)^2}{z^{\alpha-1}(1-z)^{\alpha-1}} \,\d z \\
& = \frac{(\alpha-1)^2 \, \Gamma(2\alpha)}{\bigl( \Gamma(\alpha) \bigr)^2} \int_{[0,1]}
(1-2z)^2 \,\, z^{\alpha-3}(1-z)^{\alpha-3} \d z \\
& = \frac{(\alpha-1)^2 \, \Gamma(2\alpha)}{\bigl( \Gamma(\alpha) \bigr)^2} \,
\frac{\bigl( \Gamma(\alpha-2) \bigr)^2}{\Gamma\bigl(2(\alpha-2)\bigr)} \, \E\Bigl[ (1-2 Z_{\alpha-2})^2 \Bigr]
= \frac{(\alpha-1)^2 \, \Gamma(2\alpha)}{\bigl( \Gamma(\alpha) \bigr)^2} \,
\frac{\bigl( \Gamma(\alpha-2) \bigr)^2}{\Gamma\bigl(2(\alpha-2)\bigr)} \, 4\,\mathrm{Var}(Z_{\alpha-2})
\end{align*}
where $Z_{\alpha-2}$ is a random variable following the Beta($\alpha-2,\alpha-2$) distribution; its expectation
equals indeed $\E[Z_{\alpha-2}] = 1/2$ by symmetry of the distribution w.r.t.\ $1/2$, so that
\[
\E\Bigl[ (1-2 Z_{\alpha-2})^2 \Bigr] = 4 \,\, \E\Bigl[ (1/2- Z_{\alpha-2})^2 \Bigr] = 4\,\mathrm{Var}(Z_{\alpha-2}) \qquad \mbox{where} \qquad \mathrm{Var}(Z_{\alpha-2}) = \frac{1}{4(2\alpha-3)}
\]
by a classical formula. Collecting all elements together and
using again that $x \, \Gamma(x) = \Gamma(x+1)$ for all real numbers $x > 0$, we get
\[
\Tr \cI\bigl( \beta_{\alpha,\alpha}^{(d)} \bigr) =
d \,\,
\underbrace{\frac{(\alpha-1)^2 \, \bigl( \Gamma(\alpha-2) \bigr)^2}{\bigl( \Gamma(\alpha) \bigr)^2}}_{1/(\alpha-2)^2} \,
\underbrace{\frac{\Gamma(2\alpha)}{(2\alpha-3) \, \Gamma\bigl(2(\alpha-2)\bigr)}}_{= (2\alpha-1)(2\alpha-2)(2\alpha-4)}
= d \,\, \frac{4(2\alpha-1)(\alpha-1)}{\alpha-2}
\]
hence the upper bound $\Tr \cI\bigl( \beta_{\alpha,\alpha}^{(d)} \bigr) \leq 16 d \alpha$ for $\alpha \geq 3$, which concludes the proof.
\end{proof}

	\section{Proof of Theorem~\ref{thm:nlridge} and of Corollary~\ref{cor:RNL-standard}}
	\label{apd:thm1proof}

We start with the proof of Corollary~\ref{cor:RNL-standard}. \medskip

\begin{proof}
We assume that the Gram matrix $G_T$ is full rank; otherwise,
we may adapt the proof below by resorting to Moore-Penrose pseudoinverses, just
as we do in Appendix~\ref{apd:thm2proof} for the proof of Theorem~\ref{thm:nlridgeadapted}.

Theorem~\ref{thm:nlridge} indicates that
\[
\sum_{t=1}^T (\y_t - \uhat_t \cdot \x_t)^2
\leq \inf_{\ud \in \Rd} \left\{ \sum_{t=1}^T (\y_t - \ud \cdot \x_t)^2 +\lambda \norm{\ud}^2 \right\} +  B^2 \sum_{k=1}^d \log \!\left(1 + \frac{\eigen_{k}(\G_T)}{\lambda} \right).
\]
Now, as in~\eqref{eq:Ab1}, we have a closed-form expression of the unique vector achieving the
following, infimum:
\[
\inf_{\ud \in \Rd} \left\{ \sum_{t=1}^T (\y_t - \ud \cdot \x_t)^2 \right\}
= \sum_{t=1}^T (\y_t - \ud^\star \cdot \x_t)^2
\]
Namely, $\ud^\star = \G_T^{-1} \B_T$, so that
\begin{align}
\nonumber
\norm{\ud^\star} = \bnorm{\G_T^{-1/2} \, \G_T^{-1/2} \B_T}
\leq \lambda_1\bigl( \G_T^{-1/2} \bigr) \, \norm{\G_T^{-1/2} \B_T}
& = \frac{1}{\sqrt{\lambda_d(\G_T)}} \, \norm{\G_T^{-1/2} \B_T} \\
\label{eq:ustar}
& \leq \frac{1}{\sqrt{\lambda_d(\G_T)}} \, B\sqrt{T}\,,
\end{align}
where we used, for the final inequality,
an elementary argument of orthogonal projection that is at the heart
of the proof of Theorem~\ref{thm:nlridgeadapted}: see~\eqref{eq:elementaryortho}
and the sentence after it.
In addition, Jensen's inequality (or the alternative treatment of
\citealp[page 320]{Cesa-Bianchi2006}) indicates that
\[
\sum_{k=1}^d \log \!\left(1 + \frac{\eigen_{k}(\G_T)}{\lambda} \right)
\leq d \log \!\left(1 + \frac{\sum_{k=1}^d \eigen_{k}(\G_T)}{d\lambda} \right)
= d \log \!\left(1 + \frac{\Tr(\G_T)}{d\lambda} \right)
\leq d \log \!\left(1 + \frac{T X^2}{d\lambda} \right)
\]
where $\Tr$ is the trace operator.
All in all, we get
\[
\sum_{t=1}^T (\y_t - \uhat_t \cdot \x_t)^2
\leq \inf_{\ud \in \Rd} \left\{ \sum_{t=1}^T (\y_t - \ud \cdot \x_t)^2 + \right\}
+ \lambda \norm{\ud^\star}^2 + d B^2 \log \!\left(1 + \frac{T X^2}{d\lambda} \right)
\]
and the claimed bound follows by substituting the bound~\eqref{eq:ustar}.
\end{proof}

    Now, we move to the proof of Theorem~\ref{thm:nlridge}, which
    we essentially extract from~\citet[Chapter~11]{Cesa-Bianchi2006}. We merely
    provide it because we will later need the first inequality of~\eqref{eqn:firstthm1}
    in the proof of Theorem~\ref{thm:nlridgesequenceadapted} and we wanted this article
    to be self-complete. But of course, this is extremely standard content
    and it should be skipped by any reader familiar with the basic results of sequential
    linear regression. \medskip

	\begin{proof}
		We successively prove the following two inequalities,
		\begin{equation}
		\label{eqn:firstthm1}
		\regret_T(\ud)
		\leq \lambda \norm{\ud}^2 + \sum_{t=1}^T \y_t^2 \, \transpose{\x}_t \A_t^{-1} \x_t
		\leq \lambda \norm{\ud}^2 + B^2 \sum_{k=1}^d \log \!\left(1 + \frac{\eigen_{k}(\G_T)}{\lambda} \right) \\
		\end{equation}

		\noindent{\emph{Proof of the first inequality in}~\eqref{eqn:firstthm1}.}~~We denote by $\pot_{t-1}$
		the cumulative loss up to round $t-1$ included, to which we add the regularization term:
		$$\pot_{t-1}(\ud) = \sum_{s=1}^{t-1} (\y_s - \ud \cdot \x_s)^2 + \lambda \norm{\ud}^2$$
		For all $t \geq 1$, we denote by
		\[
		\uc_t \in \argmin_{\ud \in \Rd} \left\{ \sum_{s=1}^{t-1} (\y_s - \ud \cdot \x_s)^2
		+ \lambda \norm{\ud}^2 \right\} =
		\argmin_{\ud \in \Rd} \pot_{t-1}(\ud)\,,
		\]
		the vector output by the (ordinary) ridge regression; that is,
		when no $(\ud \cdot \x_t)^2$ term is added to the regularization.
		In particular, $\uc_1 = (0,\ldots,0)^{\transp}$.
		By the very definition of $\uc_{T+1}$, for all $\ud \in \Rd$,
		\[
		\pot_{T}(\uc_{T+1}) \leq \sum_{t=1}^{T} (\y_t - \ud \cdot \x_t)^2
		+ \lambda \norm{\ud}^2\,,
		\]
		so that, for all $\ud \in \Rd$,
		\begin{align*}
		\regret_T(\ud)
		& \leq \sum_{t=1}^T (\y_t - \yhat_t)^2 + \lambda \norm{\ud}^2 - \pot_{T}(\uc_{T+1}) \\
		& = \lambda \norm{\ud}^2 + \sum_{t=1}^T \bigl( (\y_t - \yhat_t)^2 + \pot_{t-1}(\uc_t) - \pot_{t}(\uc_{t+1}) \bigr)\,,
		\end{align*}
		where the equality comes from a telescoping argument together with
		$\pot_0(\uc_0) = 0$.
		We will prove by means of direct calculations that
		\begin{equation}
		\label{eq:directcalc-ridgeNL}
		(\y_t - \yhat_t)^2 + \pot_{t-1}(\uc_t) - \pot_{t}(\uc_{t+1}) = \transpose{(\uc_{t+1} - \uhat_t)} \A_t (\uc_{t+1} - \uhat_t) - \transpose{(\uhat_t - \uc_{t})} \A_{t-1} (\uhat_{t} - \uc_t)\,;
		\end{equation}
		the first inequality in~\eqref{eqn:firstthm1} will then be obtained, as the second term in~\eqref{eq:directcalc-ridgeNL} is negative and as the first term in~\eqref{eq:directcalc-ridgeNL} can be rewritten as $\y_t^2 \, \transpose{\x}_t \A_t^{-1} \x_t$ thanks to the equality~\eqref{eqn:inovation-substitution} below,
		which states $\A_t (\uc_{t+1} - \uhat_t) = \y_t \x_t$.

		To prove~\eqref{eq:directcalc-ridgeNL}, we recall the closed-form expression~\eqref{eq:Ab1},
		that is, $\uhat_t = \A_{t}^{-1} \B_{t-1}$, and note that
		we similarly have $\uc_{t+1} = \A_{t}^{-1} \B_{t}$.
		Now, $\pot_t$ rewrites, for all $\ud \in \Rd$,
		\[
		\pot_{t}(\ud) = \left( \sum_{s=1}^{t} \y_s^2 \right) - 2 \transpose{\B}_t \ud + \transpose{\ud} \A_t \ud\,,
		\]
		so that the minimum of this quadratic form, achieved at $\ud = \uc_{t+1} = \A_{t}^{-1} \B_{t}$, equals
		\[
		\pot_{t}(\uc_{t+1}) = \left( \sum_{s=1}^{t} \y_s^2 \right) - 2 \underbrace{\transpose{\B}_t \A_{t}^{-1}}_{= \transpose{\uc_{t+1}}}
		\A_t \uc_{t+1} + \transpose{\uc}_{t+1} \A_{t} \uc_{t+1}
		= \left( \sum_{s=1}^{t} \y_s^2 \right) - \transpose{\uc}_{t+1} \A_{t} \uc_{t+1}\,.
		\]
		In particular,
		\begin{equation}
		\label{eq:minfq}
		\pot_{t-1}(\uc_t) - \pot_{t}(\uc_{t+1})
		= - y_t^2 + \transpose{\uc}_{t+1} \A_{t} \uc_{t+1} - \transpose{\uc}_{t} \A_{t-1} \uc_{t}\,.
		\end{equation}

		We now expand the first term in~\eqref{eq:directcalc-ridgeNL}.
		To that end, we use
		that from the closed-form expressions of $\uhat_t$ and $\uc_{t+1}$,
		\begin{equation}
		\label{eqn:inovation-substitution}
		\A_t (\uc_{t+1} - \uhat_t) = \A_t \bigl( \A_{t}^{-1} \B_{t} - \A_{t}^{-1} \B_{t-1} \bigr) = \B_t - \B_{t-1} =
		\y_t \x_t\,.
		\end{equation}

		Therefore, $\y_t \yhat_t = \y_t \transpose{\x}_t \uhat_t = \transpose{(\uc_{t+1} - \uhat_t)} \A_t \uhat_t$ and
		\begin{align}
		\nonumber
		(\y_t - \yhat_t)^2 = \y^2_t - 2 \y_t \yhat_t + \yhat^2_t
		& = \y^2_t - 2\transpose{(\uc_{t+1} - \uhat_t)} \A_t \uhat_t
		+ \transpose{\uhat}_t \x_t \transpose{\x}_t \uhat_t \\
		\label{eq:sqexpand}
		& = \y^2_t - 2\transpose{(\uc_{t+1} - \uhat_t)} \A_t \uhat_t
		+ \transpose{\uhat}_t (\A_t - \A_{t-1}) \uhat_t\,,
		\end{align}
		where in the last equality we used that by definition $\A_t - \A_{t-1} = \x_t \transpose{\x}_t$.

		Putting~\eqref{eq:minfq} and~\eqref{eq:sqexpand} together, we proved
		\begin{align*}
		\lefteqn{(\y_t - \yhat_t)^2 + \pot_{t-1}(\uc_t) - \pot_{t}(\uc_{t+1})} \\
		& = - 2\transpose{(\uc_{t+1} - \uhat_t)} \A_t \uhat_t
		+ \transpose{\uhat}_t (\A_t - \A_{t-1}) \uhat_t
		+ \transpose{\uc}_{t+1} \A_{t} \uc_{t+1} - \transpose{\uc}_{t} \A_{t-1} \uc_{t} \\
		& = \transpose{\uc}_{t+1}  \A_{t} \uc_{t+1} - 2 \transpose{\uc_{t+1} } \A_t \uhat_t + \transpose{\uhat}_t \A_t \uhat_t - \bigl( \transpose{\uhat}_t \A_{t-1} \uhat_t - 2 \transpose{\uhat_{t}} \underbrace{\A_{t} \uhat_t}_{=\A_{t-1} \uc_t} + \transpose{\uc}_{t}  \A_{t-1} \uc_{t} \bigr)\,.
		\end{align*}
		In the last equation, we are about to use the equality $\A_t \uhat_t = \A_{t-1} \uc_t = \B_{t-1}$, which we get from the closed-form expressions of $\uhat_t$ and $\uc_{t}$. We then recognize the desired difference between two quadratic forms:
		\begin{align*}
		\lefteqn{(\y_t - \yhat_t)^2 + \pot_{t-1}(\uc_t) - \pot_{t}(\uc_{t+1})} \\
		& = \bigl( \transpose{\uc}_{t+1}  \A_{t} \uc_{t+1} - 2 \transpose{\uc_{t+1} } \A_t \uhat_t + \transpose{\uhat}_t \A_t \uhat_t \bigr)- \bigl( \transpose{\uhat}_t \A_{t-1} \uhat_t - 2 \transpose{\uhat_{t}} \A_{t-1} \uc_t + \transpose{\uc}_{t}  \A_{t-1} \uc_{t} \bigr) \\
		& = \transpose{(\uc_{t+1} - \uhat_t)} \A_t (\uc_{t+1} - \uhat_t) - \transpose{(\uhat_t - \uc_{t})} \A_{t-1} (\uhat_{t} - \uc_t)\,.
		\end{align*}

		\noindent{\emph{Proof of the second inequality in}~\eqref{eqn:firstthm1}.}
		Because $y^2_t \leq B^2$, we only need to prove
		\[
		\sum_{t=1}^T \transpose{\x}_t \A_t^{-1} \x_t
		\leq \sum_{k=1}^d \log \!\left(1 + \frac{\eigen_{k}(\G_T)}{\lambda} \right)\,.
		\]
		Now, Lemma~\ref{lem:linalg} below shows that
		\[
		\sum_{t=1}^T \transpose{\x}_t \A_t^{-1} \x_t
		= \sum_{t=1}^T \left( 1 - \frac{\det(\A_{t-1})}{\det(\A_{t})} \right).
		\]
		We then use $1-u \leq - \log u$ for $u > 0$ and identify a telescoping sum,
		\[
		\sum_{t=1}^T \left( 1 - \frac{\det(\A_{t-1})}{\det(\A_{t})} \right)
		\leq \sum_{t=1}^T \log \frac{\det(\A_{t})}{\det(\A_{t-1})}
		= \log \frac{\det(\A_{T})}{\det(\A_{0})}\,.
		\]
		All in all, we proved so far
		\[
		\sum_{t=1}^T \transpose{\x}_t \A_t^{-1} \x_t
		\leq \log \frac{\det(\A_{T})}{\det(\A_{0})}\,,
		\]
		and may conclude by noting that
		\[
		\det(\A_{T}) = \det(\lambda \, \Id + \G_T) = \prod_{k=1}^{d} (\lambda + \eigen_{k}(\G_T))
		\qquad \mbox{and} \qquad
		\det(\A_{0}) = \det(\lambda \, \Id) = \lambda^d\,.
		\]
		\vspace{-1cm}

	\end{proof}

	\begin{lemma}
		\label{lem:linalg}
		Let $V$ an arbitrary $d \times d$ full-rank matrix, let $\ud$ and $\vd$ two arbitrary vectors of $\Rd$, and let $\U = \V - \ud \transpose{\vd}$. Then
		\[
		\transpose{\vd} \V^{-1} \ud = 1 - \frac{\det(\U)}{\det(\V)} \,.
		\]
	\end{lemma}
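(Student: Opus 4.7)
The plan is to recognize Lemma~\ref{lem:linalg} as a direct consequence of the matrix determinant lemma for rank-one updates. First, since $\V$ is assumed full rank and hence invertible, I would factor
\[
\U = \V - \ud\transpose{\vd} = \V\bigl(\Id - \V^{-1}\ud\,\transpose{\vd}\bigr),
\]
and take determinants to obtain $\det(\U) = \det(\V)\,\det\bigl(\Id - (\V^{-1}\ud)\,\transpose{\vd}\bigr)$. This reduces the lemma to the rank-one identity $\det(\Id - a\transpose{b}) = 1 - \transpose{b}\,a$ for arbitrary $a,b \in \R^d$, applied with $a = \V^{-1}\ud$ and $b = \vd$.

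The cleanest way I would establish this auxiliary identity is through a Schur-complement argument. Consider the $(d+1) \times (d+1)$ block matrix
\[
\M = \begin{pmatrix} \Id & a \\ \transpose{b} & 1 \end{pmatrix}.
\]
Its determinant, computed by forming the Schur complement with respect to the lower-right scalar block, equals $\det(\Id - a\transpose{b})$, while the Schur complement with respect to the upper-left $\Id$ block instead gives $1 - \transpose{b}\,a$. Equating the two expressions yields the rank-one identity, and substituting back delivers
\[
\det(\U) = \det(\V)\bigl(1 - \transpose{\vd}\,\V^{-1}\ud\bigr),
\]
which rearranges into the statement of the lemma after dividing by the nonzero quantity $\det(\V)$.

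There is essentially no genuine obstacle here: this is one of the most classical identities in linear algebra, and the argument is little more than careful bookkeeping of Schur complements. A possible shortcut, bypassing even the auxiliary rank-one identity, would be to apply the same block-matrix computation directly to
\[
\begin{pmatrix} \V & \ud \\ \transpose{\vd} & 1 \end{pmatrix},
\]
whose determinant evaluates to $\det(\V)\bigl(1 - \transpose{\vd}\,\V^{-1}\ud\bigr)$ via one Schur complement and to $\det(\V - \ud\transpose{\vd}) = \det(\U)$ via the other, producing the lemma in a single stroke.
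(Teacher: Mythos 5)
Your proof is correct and follows essentially the same route as the paper's: both reduce the claim to the determinant of an augmented $(d+1)\times(d+1)$ block matrix, the paper via an explicit three-way block triangular factorization and you via the two Schur-complement formulas for the same determinant (which are, of course, two ways of packaging the same computation). Your ``shortcut'' that applies the Schur-complement bookkeeping directly to $\left(\begin{smallmatrix} \V & \ud \\ \transpose{\vd} & 1\end{smallmatrix}\right)$ is a mild streamlining that bypasses the intermediate reduction to $\V=\Id$, but it is not a genuinely different argument.
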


	\begin{proof}
		If $V = \Id$, we are left to show that $\det(\Id - \ud \transpose{\vd}) = 1 - \transpose{\vd} \ud$. The result follows from taking the determinant of every term of the equality
		\[
		\left[ \begin{array}{cc}
		\Id & 0 \\
		\transpose{\vd} & 1
		\end{array}
		\right]
		\left[ \begin{array}{cc}
		\Id-\ud\transpose{\vd} & -\ud \\
		0 & 1
		\end{array}
		\right]
		\left[ \begin{array}{cc}
		\Id & 0 \\
		-\transpose{\vd} & 1
		\end{array}
		\right]
		=
		\left[ \begin{array}{cc}
		\Id & -\ud \\
		0 & 1-\transpose{\vd}\ud
		\end{array}
		\right] \,.
		\]
		Now, we can reduce the case of a general $\V$ to this simpler case by noting that
		\[
		\det(\U) = \det\bigl(\V-\ud\transpose{\vd}\bigr)
		= \det(\V) \det\Bigl(\Id-\bigl(\V^{-1}\ud\bigr)\transpose{\vd}\Bigr)
		= \det(\V) \bigl(1-\transpose{\vd}\V^{-1}\ud\bigr)\,.
		\]
		\vspace{-1cm}

	\end{proof}

\section{Technical complements to Section~\ref{sec:beforehandfeatures}}
\label{sec:techbeforehand}

In this section we provide some additional discussions to those of
Remark~\ref{rk:Bartlett} (Section~\ref{sec:Bartlett})
and also extend the proof of Theorem~\ref{thm:nlridgeadapted}
to work in the general case (Section~\ref{apd:thm2proof}).

\subsection{Complements to Remark~\ref{rk:Bartlett}}
\label{sec:Bartlett}

We detail here why the derivation of a closed-form bound as
led by \citet{Bartlett2015} only entails a bound of
the order of $2 d B^2 \ln T$ and why it cannot easily be improved.

Indeed, Theorem~5 by \citet{Bartlett2015} indicates, in the case where $d=1$ and $B=1$, that
\begin{equation}
\label{eq:Bartlett}
\forall \, T \geq 1, \qquad \regret^\star_T \leq f(T)
\end{equation}
for any function $f: \{1,2,\ldots\} \to \R_+$ satisfying $\smash{e^{-f(T)/2} \leq f(T+1)-f(T)}$ for all $T\geq 1$. As they showed, the function $f(T) = 2\log(1+T/2)+1$ is a suitable choice, but it leads to
the extra multiplicative factor of~$2$ that we pointed out.

However, this choice for $f$ does not seem to be easily improvable;
for instance, functions $f$ of the form $T \mapsto a \log T + b$ for some $a<2$ and $b\in \R$ are such
that
\[
e^{-f(T)/2} = \Theta_T\bigl(T^{-a/2}\bigr)
\qquad \mbox{and} \qquad
f(T+1) - f(T) = a\log\!\bigg(1+\frac{1}{T}\bigg) = \mathcal{O}_T\bigl( T^{-1} \bigr)\,,
\]
hence, are not suitable choices for the bound~\eqref{eq:Bartlett}.

	\subsection{Proof of Theorem~\ref{thm:nlridgeadapted} in the general case}
	\label{apd:thm2proof}

In this section we extend the proof of Theorem~\ref{thm:nlridgeadapted},
provided only in the case of a full-rank Gram matrix $G_T$ in
Section~\ref{sec:beforehandfeatures}, to the general case of
a possibly non-invertible Gram matrix $G_T$.

To that end, we first explain how the closed-form expression~\eqref{eqn:nlridgeadaptedcf} is derived.
	We rewrite the definition equation~\eqref{eq:nlridgeadapted} of $\uhat_t$ as
	\[
	\uhat_t \in \argmin_{\ud \in \Rd} \bigl\{ \transpose{\ud}(\lambda \, \G_T + \G_t) \ud -2 \transpose{\B}_{t-1} \ud \bigr\}\,.
	\]
Because the matrix $\lambda \, \G_T + \G_t$ is positive semidefinite, the considered $\argmin$ is also the set of values $\ud'$ where the gradient vanishes: $(\lambda \, \G_T + \G_t) \ud' = \B_{t-1}$. This system is possibly under-defined because $\ud' \in \Rd$ and $\lambda \, \G_T + \G_t$ is a matrix of size $\d \times \d$, possibly not full rank. The system has at least one solution but the one with minimal Euclidean norm is given
by the Moore-Penrose inverse, see Corollary~\ref{cor:pseudoinverse}~\ref{eqn:minnormMPI}:
\[
\uhat_t = \bigl(\lambda\, \G_T + \G_t \bigr)^\dagger \B_{t-1}\,.
\]
We may now turn to the general proof of Theorem~\ref{thm:nlridgeadapted}.
For an integer $k \geq 1$, we denote therein by $\I_k$ the $k \times k$ identity matrix. \\

\begin{proof}
As a consequence of the spectral theorem applied to the symmetric matrix $\G_T$,
there exists a matrix $\U$ of size $d \times r_T$ and a full rank square matrix $\Sig$ of size $r_T \times r_T$ such that
$\transpose{\U} \U = \I_{\rT}$ and $\G_T = \U \Sig \transpose{\U}$.
We could even impose that the matrix $\Sig$ be diagonal but this property will not be used in this proof.

We will apply the (already proven) bound of Theorem~\ref{thm:nlridgeadapted} in the full rank case.
To that end, we consider the modified sequence of features
	\[
	\xt_t = \transpose{\U} \x_t
	\]
	and first prove that the strategy~\eqref{eq:nlridgeadapted} on the $\xt_t$ leads to
	the same forecasts as the same strategy on the original features $\x_t$; that is,
	\[
	\ut_t \cdot \xt_t = \uhat_t \cdot \x_t\,, \qquad \mbox{where} \qquad
	\ut_t \in \argmin_{\vd \in \RrT} \left\{\sum_{s=1}^{t-1} (\y_s - \vd \cdot \xt_s)^2 + (\vd \cdot \xt_t)^2+ \lambda \sum_{s=1}^T (\vd \cdot \xt_s)^2 \right\}.
	\]
It suffices to prove $\U \ut_t = \uhat_t$, which we do below. Then, from this equality and
the definition $\xt_t = \transpose{\U} \x_t$, we have, as desired,
\[
\ut_t \cdot \xt_t = \ut_t \cdot \bigl( \transpose{\U} \x_t \bigr) = \bigl( \U \ut_t \bigr) \cdot \x_t = \uhat_t \cdot \x_t\,.
\]

Now, to prove $\U \ut_t = \uhat_t$, we resort to the closed-form expression~\eqref{eqn:nlridgeadaptedcf}, which gives that
	\[
	\U \ut_t = \U \left( \lambda \, \sum_{s=1}^{T} \xt_s \transpose{\xt_s} + \sum_{s=1}^{t} \xt_s \transpose{\xt_s} \right)^{\!\! \dagger} \sum_{s=1}^{t-1} y_s \xt_s
	= \U
	\Bigl( \transpose{\U}(\lambda \G_T +  \G_{t}) \U \Bigr)^{\! \dagger}
	\transpose{\U}
	\B_{t-1}\,.
	\]
To simplify this expression, we use twice the property of Moore-Penrose pseudoinverses stated in Corollary~\ref{cor:pseudoinverse}~\ref{eqn:mpproduct},
once with $\M = {\U}$ and the second time with $\N = \transpose{\U}$,
which both satisfy the required condition for Corollary~\ref{cor:pseudoinverse}~\ref{eqn:mpproduct}, as well as the matrix equalities in Corollary~\ref{cor:pseudoinverse}~\ref{eq:MPI-transp}, and we get
\[
\U \Bigl( \transpose{\U}(\lambda \G_T +  \G_{t}) \U \Bigr)^{\! \dagger} \transpose{\U}
=
\Bigl(\U \transpose{\U}(\lambda \G_T +  \G_{t}) \U \transpose{\U} \Bigr)^{\! \dagger}
=
(\lambda \G_T +  \G_{t})^\dagger\,,
\]
where the last equality comes from
\begin{equation}
\label{eq:UU-G-UU}
\U \transpose{\U}(\lambda \G_T +  \G_{t}) \U \transpose{\U} = \lambda \G_T +  \G_{t}\,.
\end{equation}
Indeed, from $\transpose{\U} \U = \I_{\rT}$ we get $\U \transpose{\U} = P_{\image{\G_T}}$, the orthogonal projector on the image of $\G_T$; we recall in~\eqref{eqn:covariancerangesseq} why $\image{\G_t} \subseteq \image{\G_T}$, which implies
$\U \transpose{\U} (\lambda \G_T +  \G_{t}) = \lambda \G_T +  \G_{t}$. Transposing this leads to
$(\lambda \G_T +  \G_{t}) \U \transpose{\U} = \lambda \G_T +  \G_{t}$, from which
the desired equality~\eqref{eq:UU-G-UU} follows by a left multiplication
again by $\U \transpose{\U} = P_{\image{\G_T}}$.

	We may now apply the bound of the Theorem~\ref{thm:nlridgeadapted} in the full rank case on feature sequences $\xt_1,\ldots,\xt_T \in \RrT$ and observations $\y_1,\ldots,y_T \in [-B,B]$; this is because the associated Gram matrix $\transpose{\U} \G_T \U = \Sig$ is now full rank. We get, for all $\vd \in \RrT$,
	\begin{equation}
	\label{eqn:thm2boundrT}
	\sum_{t=1}^T (\y_t - \uhat_t \cdot \x_t)^2 = \sum_{t=1}^T (\y_t - \ut_t \cdot \xt_t)^2 \leq \sum_{t=1}^T (\y_t - \vd \cdot \xt_t)^2 + \lambda T B^2 + r_T B^2 \log \! \left(1+\frac{1}{\lambda}\right).
	\end{equation}
To conclude the proof, its only remains to show that
\begin{equation}
\label{eqn:infequal}
\inf_{\vd \in \RrT} \sum_{t=1}^T (\y_t - \vd \cdot \xt_t)^2
= \inf_{\ud \in \R^d} \sum_{t=1}^T (\y_t - \ud \cdot \x_t)^2\,.
\end{equation}
Now, a basic argument of linear algebra, recalled in~\eqref{eqn:covariancerange} of
Appendix~\ref{apd:covariancesequence}, indicates
$\image{\G_t} = \image{\X_t}$. Together with the inclusion $\image{\G_t} \subseteq \image{\G_T}$
and the fact that $\U \transpose{\U} = P_{\image{\G_T}}$, both already used above,
we get $\U \transpose{\U} \x_t = \x_t$.
A direct consequence is that for any $\ud$ in $\Rd$,
\[
\ud \cdot \x_t = \ud \cdot \bigl( \U \transpose{\U} \x_t \bigr) =
\bigl( \transpose{\U} \ud \bigr) \cdot \bigl( \transpose{\U} \x_t \bigr) =
\bigl( \transpose{\U} \ud \bigr) \cdot \xt_t\,,
\]
from which~\eqref{eqn:infequal} follows, by considering $\vd = \transpose{\U} \ud$ and by
the surjectivity of $\transpose{\U}$ onto $\RrT$ (recall that $\U$ and
$\transpose{\U}$ are of rank $r_T$).
\end{proof}

\section{Proof of Theorem~\ref{thm:nlridgesequenceadapted}}
\label{apd:nlridgeadaptatedproof}

We recall in Appendix~\ref{apd:covariancesequence}
many basic properties of Gram matrices and Moore-Penrose pseudoinverses
to be used in the proof below. \medskip

	\begin{proof}
		We successively prove the following two inequalities,
		\begin{equation}
		\label{eqn:firstthm3}{}
		\regret_T(\ud)
		\leq \sum_{t=1}^T \y_t^2 \transpose{\x}_t \G_t^\dagger \x_t
		\leq B^2 \sum_{k=1}^{r_T} \log \bigl(\eigen_k(\G_T) \bigr) + B^2 \sum_{t \in [\![1, T]\!] \cap \jump} \log \! \left(\frac{1}{\eigen_{r_t}(\G_{t})} \right) + r_T B^2\,,
\end{equation}
where actually, the first inequality is a classic inequality already proved by \citet[Theorem~3.2]{FoWa03}.
We provide its derivation for the sake of completeness only.
        \medskip

		\noindent{\emph{Proof of the first inequality in}~\eqref{eqn:firstthm3}.}~~We obtain it as a limit case.
To do so, we start by exactly rewriting the first inequality of~\eqref{eqn:firstthm1}, where a $\lambda > 0$ regularization factor was
considered:
		\begin{equation}
		\label{eqn:firstthm3_1}
		 \sum_{t=1}^T \big(\y_t - \transpose{\x}_t (\lambda \Id + \G_t)^{-1} \B_{t-1} \big)^2 - \sum_{t=1}^T (\y_t - \ud \cdot \x_t)^2 \leq \sum_{t=1}^T \y_t^2 \, \transpose{\x}_t (\lambda \Id + \G_t)^{-1} \x_t + \lambda \norm{\ud}^2\,.
		\end{equation}
Since
\[
\G_t = \X_t \transpose{\X}_t
\qquad \text{where} \qquad
\X_t =
		\bigl[
		\begin{array}{ccc}
		\x_1 &
		\cdots &
		\x_t
		\end{array}
		\bigr]
\]
we note that $\transpose{\x}_t (\lambda \Id + \G_t)^{-1}$ is the last line of the matrix $\transpose{\X}_t \big(\lambda \Id + \X_t \transpose{\X}_t\big)^{-1}$, which tends to $\X^{\dagger}$ when $\lambda \rightarrow 0$ as indicated by Corollary~\ref{cor:pseudoinverse}~\ref{eqn:mplimit}.
Now, $\X^{\dagger} = \transpose{\X}_t \big(\X_t \transpose{\X}_t\big)^\dagger = \transpose{\X}_t \G_t^\dagger$
by Corollary~\ref{cor:pseudoinverse}~\ref{eqn:mphermit}, thus
\[
\lim\limits_{\lambda \rightarrow 0}\transpose{\x}_t (\lambda \Id + \G_t)^{-1} = \transpose{\x}_t \G_t^\dagger\,.
\]
Therefore, the desired inequality for the considered forecaster,
\[
\regret_T(\ud) = \sum_{t=1}^T \big(\y_t - \transpose{\x}_t \G_t^\dagger \B_{t-1} \big)^2 - \sum_{t=1}^T (\y_t - \ud \cdot \x_t)^2 \leq \sum_{t=1}^T \y_t^2 \transpose{\x}_t \G_t^\dagger \x_t\,,
\]
is obtained by taking the limit $\lambda \rightarrow 0$
in~\eqref{eqn:firstthm3_1}. \\

		\noindent{\emph{Proof of the second inequality in}~\eqref{eqn:firstthm3}.}
The first part of our derivation is similar to what is performed in \citet[Theorem~4 of Appendix~D]{Luo16},
while the second part slightly improves on their result thanks to a more careful analysis using however the same ingredients.

		Because $y^2_t \leq B^2$, we only need to prove
		\[
		\sum_{t=1}^T \transpose{\x}_t \G_t^\dagger \x_t
		\leq \sum_{k=1}^{r_T} \log \bigl(\eigen_k(\G_T) \bigr) + \sum_{t \in \jump \cap [\![1, T]\!]} \log \! \left( \frac{1}{\eigen_{r_t}(\G_{t})} \right) + r_T\,.
		\]
		Now, Lemma~\ref{lem:mplinalg} below shows that
		\[
		\sum_{t=1}^T \transpose{\x}_t \G_t^\dagger \x_t
		= \sum_{t=1}^T \left( 1 - \prod_{k=1}^{r_t} \frac{ \eigen_k(\G_{t-1})}{\eigen_k(\G_{t})} \right);
		\]
we assumed with no loss of generality that $\x_1$ is not the null vector, hence all $\G_t$ are at
least of rank~$1$. Indeed, when $\x_t$ is the null vector, all linear combinations result in the same
prediction equal to~$0$ and incur the same instantaneous quadratic loss.

Now, given the definition of the set $\jump$, whose cardinality is $r_T$, we have
$\eigen_{r_t}(\G_{t-1}) = 0$ when $t \in \jump$ (and this includes $t=1$, with the convention
that $\G_0$ is the null matrix), while $r_{t-1} = r_t$ if $t \notin \jump$. Therefore,
		\begin{align*}
				\sum_{t=1}^T \transpose{\x}_t \G_t^\dagger \x_t & \leq
				\sum_{t \in \jump \cap [\![1, T]\!]} \left( 1 - \prod_{k=1}^{r_t} \frac{\eigen_k(\G_{t-1})}{\eigen_k(\G_{t})} \right) + \sum_{t \in [\![1, T]\!] \setminus \jump} \left( 1 - \prod_{k=1}^{r_t} \frac{\eigen_k(\G_{t-1})}{\eigen_k(\G_{t})} \right) \\
				& = r_T + \sum_{t \in [\![1, T]\!] \setminus \jump} \left( 1 - \frac{D_{t-1}}{D_t} \right),
		\end{align*}
where $D_t = \displaystyle{\prod_{k=1}^{r_t} \eigen_k(\G_{t})}$ is the product of the positive eigenvalues of $\G_t$.

Now (this is where our analysis is more careful), using $1-u \leq - \log u$ for $u > 0$, we get an almost telescoping sum,
		\[
		 \sum_{t \in [\![1, T]\!] \setminus \jump} \left( 1 - \frac{D_{t-1}}{D_t} \right)
		\leq \sum_{t \in [\![1, T]\!] \setminus \jump} \ln \frac{D_t}{D_{t-1}}
= \ln \frac{D_T}{D_1} + \sum_{t \in \jump \cap [\![2, T]\!]} \ln \frac{D_{t-1}}{D_{t} }
\]
(note that we dealt separately with $t=1$, which belongs to $\jump$).
Because eigenvalues cannot decrease with $t$, see~\eqref{eqn:covarianceeigevalues},
we have in particular $\eigen_k(\G_{t-1}) \leq \eigen_k(\G_{t})$ for all $1 \leq k \leq r_t -1$.
Thus, for $t \in \jump$ with $t \ne 1$, we have
\[
\ln \frac{D_{t-1}}{D_{t}} \leq \log \! \left( \frac{1}{\eigen_{r_t}(\G_{t})} \right)\,,
\]
Substituting the definition of $D_T$ and the equality $D_1 = \eigen_{r_1}(\G_{1})$, and collecting all bounds
together leads to the second inequality in~\eqref{eqn:firstthm3}.
\end{proof}

The lemma below was essentially stated and proved by \citet[Lemma~D.1]{CBCG05}.

	\begin{lemma}[Rewriting of $\transpose{\x} \A^\dagger \x$]
		\label{lem:mplinalg}
Let $\mathbf{B}$ be a $d \times d$ symmetric positive semidefinite matrix (possibly the null matrix),
let $\x \in \R^d$, and
and let $\A = \mathbf{B} + \x \transpose{\x}$. Denote by $r$ the rank of $\A$ and assume that $r \geq 1$.
Then
		\begin{equation}
		\transpose{\x} \A^\dagger \x = 1 - \prod_{k=1}^{r} \frac{ \eigen_k(\mathbf{B})}{\eigen_k(\mathbf{A})}\,.
		\end{equation}
	\end{lemma}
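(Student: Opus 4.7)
My approach is to reduce to the already-proved Lemma~\ref{lem:linalg} by a standard regularization argument. For $\epsilon > 0$, set $\mathbf{B}_\epsilon = \mathbf{B} + \epsilon \Id$ and $\A_\epsilon = \A + \epsilon \Id$, both of which are symmetric positive definite (hence invertible), and which satisfy the same rank-one update relation $\A_\epsilon = \mathbf{B}_\epsilon + \x \transpose{\x}$. Applying Lemma~\ref{lem:linalg} with $\V = \A_\epsilon$ and $\ud = \vd = \x$, so that $\U = \mathbf{B}_\epsilon$, and then using that the eigenvalues of $\mathbf{B}_\epsilon$ and $\A_\epsilon$ are $\eigen_k(\mathbf{B}) + \epsilon$ and $\eigen_k(\A) + \epsilon$ respectively, yields
\[
\transpose{\x} \A_\epsilon^{-1} \x \;=\; 1 - \frac{\det(\mathbf{B}_\epsilon)}{\det(\A_\epsilon)} \;=\; 1 - \prod_{k=1}^{d} \frac{\eigen_k(\mathbf{B}) + \epsilon}{\eigen_k(\A) + \epsilon}\,.
\]
The rest of the proof consists in passing to the limit $\epsilon \to 0$ on both sides.

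For the left-hand side, the key fact to establish is that $\x \in \image(\A)$; once this is granted, $(\A + \epsilon \Id)^{-1} \x \to \A^\dagger \x$ as $\epsilon \to 0$ by the standard characterization of the Moore-Penrose pseudoinverse already invoked in the proof of Theorem~\ref{thm:nlridgesequenceadapted} via Corollary~\ref{cor:pseudoinverse}~\ref{eqn:mplimit}. To prove $\x \in \image(\A)$, I would rely on the pointwise identity $\transpose{v} \A v = \transpose{v} \mathbf{B} v + (\transpose{v} \x)^2$: for any $v \in \ker(\A)$, both non-negative summands must vanish, which in particular forces $\transpose{v} \x = 0$, hence $\x \in \ker(\A)^\perp = \image(\A)$.

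For the right-hand side, the same vanishing argument additionally gives $\transpose{v} \mathbf{B} v = 0$, hence $\mathbf{B} v = 0$ by positive semidefiniteness of $\mathbf{B}$, for every $v \in \ker(\A)$. Thus $\ker(\A) \subseteq \ker(\mathbf{B})$, equivalently $\rank(\mathbf{B}) \leq \rank(\A) = r$, so that $\eigen_k(\mathbf{B}) = 0$ whenever $k > r$ (eigenvalues being sorted non-increasingly as in Notation~\ref{not1}). Consequently, each factor in the product with index $k > r$ equals $\epsilon / \epsilon = 1$ and contributes trivially, while each factor with $k \leq r$ has $\eigen_k(\A) > 0$ and converges to $\eigen_k(\mathbf{B})/\eigen_k(\A)$. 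Combining the two limits yields exactly the claimed identity.

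The main technical obstacle will be the careful matching of the vanishing eigenvalues on the right-hand side, and more precisely the inclusion $\ker(\A) \subseteq \ker(\mathbf{B})$, without which the $k > r$ factors would not cancel cleanly in the limit. Everything else---the reduction to Lemma~\ref{lem:linalg}, the determinant factorization via the spectral theorem, and the pseudoinverse limit for $(\A + \epsilon \Id)^{-1} \x$---is routine once this inclusion (together with $\x \in \image(\A)$) is in hand.
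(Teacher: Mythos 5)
Your regularization argument is correct and takes a genuinely different route from the paper. Both proofs reduce to Lemma~\ref{lem:linalg}, but you apply it to the full $d\times d$ perturbed matrices $\A_\epsilon = \A + \epsilon\Id$ and $\mathbf{B}_\epsilon = \mathbf{B} + \epsilon\Id$ and then pass to the limit $\epsilon \to 0$, whereas the paper applies it inside a compressed $r\times r$ system: it writes $\A = \U\Sig\transpose{\U}$ via the spectral theorem with $\transpose{\U}\U = \I_r$, applies Lemma~\ref{lem:linalg} to $\Sig$ and $\Gam = \Sig - \transpose{\U}\x\transpose{(\transpose{\U}\x)}$, and then identifies $\det(\Gam)$ with $\prod_{k\le r}\eigen_k(\mathbf{B})$ by showing $\Gam = \transpose{\U}\mathbf{B}\U$ and carefully matching eigenvalues through the Courant--Fischer characterization and the inclusion $\image{\mathbf{B}}\subseteq\image{\U}$. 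Your approach avoids that eigenvalue-matching step entirely; the only facts you need are $\x\in\image{\A}$ and $\ker(\A)\subseteq\ker(\mathbf{B})$, both of which you correctly extract from the identity $\transpose{v}\A v = \transpose{v}\mathbf{B}v + (\transpose{v}\x)^2$ for $v\in\ker(\A)$. This is arguably more economical. The paper's version, in exchange, stays purely algebraic and never invokes a limit.

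One small imprecision: you cite Corollary~\ref{cor:pseudoinverse}~\ref{eqn:mplimit} for the convergence $(\A+\epsilon\Id)^{-1}\x\to\A^\dagger\x$, but that item is stated for the form $\transpose{\M}(\alpha\I_m+\M\transpose{\M})^{-1}$, which with $\M=\A$ gives $\A(\alpha\Id+\A^2)^{-1}\to\A^\dagger$, not $(\A+\epsilon\Id)^{-1}\to\A^\dagger$ (indeed the latter diverges on $\ker(\A)$). The fact you actually use---that $(\A+\epsilon\Id)^{-1}\x\to\A^\dagger\x$ whenever $\x\in\image{\A}$---is true and is an immediate consequence of the spectral decomposition of the symmetric matrix $\A$, but it deserves its own one-line justification rather than a pointer to that corollary.
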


	\begin{proof}
This lemma is a consequence of the less general Lemma~\ref{lem:linalg}.
As a consequence of the spectral theorem applied to the symmetric matrix $\A$,
there exists a matrix $\U$ of size $d \times r$ and a full rank square matrix $\Sig$ of size $r \times r$ such that
$\transpose{\U} \U = \I_{r}$ and $\A = \U \Sig \transpose{\U}$.
We can and will even impose that the matrix $\Sig$ is diagonal, with diagonal values
equal to $\lambda_1(\A),\ldots,\lambda_r(\A)$, the positive eigenvalues of $\A$.
Let $\Gam = \Sig - \transpose{\U}\x \transpose{(\transpose{\U} \x)}$.
Lemma~\ref{lem:linalg} with $\Gam$, $\Sig$ and $\transpose{\U} \x$ indicates that
\[
\transpose{\x} \bigl( \U \Sig^{-1} \transpose{\U} \bigr) \x
= \transpose{\big(\transpose{\U}\x\big)} \Sig^{-1} \big(\transpose{\U}\x\big) = 1 - \frac{\det(\Gam)}{\det(\Sig)}
\qquad \mbox{where} \qquad
\det(\Sig) = \prod_{k=1}^{r} \eigen_k(\mathbf{A})\,.
\]
Now, it can be easily checked (by noting that all four properties in Proposition~\ref{prop:pseudoinverse} are satisfied)
that $\A^\dagger = \U \Sig^{-1} \transpose{\U}$, so that from the above equality, it suffices to show that
\[
\det(\Gam) = \prod_{k=1}^{r} \eigen_k(\mathbf{B})
\]
to conclude the proof. To do so, we first remark that $\mathbf{B} = \A - \x\transpose{\x} = \U\Sig\transpose{\U} - \x\transpose{\x}$, which yields
\[
	\transpose{\U} \mathbf{B} \U = \U^\top \U\Sig\transpose{\U} \U - \transpose{\U} \x\transpose{\x} \U = \Sig - \transpose{\U} \x\transpose{\x} \U = \Gam\,.
\]
Using again that
$\transpose{\U} \U = \I_r$, we note that $\transpose{\ud} \ud = \transpose{(\U\ud)} \U \ud$
for all $\ud \in \R^r$.
From this and $\transpose{\U} \mathbf{B} \U = \Gamma$, we get in particular
\begin{equation}
	\label{eq:eigenUSig}
\sup_{0 \ne \ud \in \R^r}
\frac{\transpose{\ud} \Gam \ud}{\transpose{\ud} \ud} =
\sup_{0 \ne \ud \in \R^r}
\frac{ \transpose{(\U\ud)}  \mathbf{B} (\U\ud)}{\transpose{(\U\ud)} \U \ud} \,.
\end{equation}
Next we show that
\begin{equation}
	\label{eq:eigenUSigbis}
\sup_{0 \ne \ud \in \R^r}
\frac{ \transpose{(\U\ud)}  \mathbf{B} (\U\ud)}{\transpose{(\U\ud)} \U \ud}
= \sup_{0 \ne \vd \in \R^d}
\frac{ \transpose{\vd}  \mathbf{B} (\vd)}{\transpose{\vd} \vd}\,,
\end{equation}
which indicates, together with~\eqref{eq:eigenUSig} and
the characterization~\eqref{eq:eigenvaluecharacterization} of the eigenvalues of symmetric positive semidefinite matrices, that $\mathbf{B}$ and $\Gam$ have the same top $r$ eigenvalues, as claimed.
Now, to show~\eqref{eq:eigenUSigbis},
we recall that for a symmetric matrix $\mathbf{B}$, we have
$\R^d = \ker(\mathbf{B}) \oplus \image{\mathbf{B}}$, so that,
\[
\sup_{0 \ne \vd \in \R^d}
\frac{ \transpose{\vd}  \mathbf{B} (\vd)}{\transpose{\vd} \vd}
= \sup_{0 \ne \vd \in \image{\mathbf{B}}}
\frac{ \transpose{\vd}  \mathbf{B} (\vd)}{\transpose{\vd} \vd}\,.
\]
This leads to~\eqref{eq:eigenUSigbis} via the inclusions
\[
\image{\mathbf{B}} \subseteq \image{\U} \subseteq \R^d
\]
which themselves follow from the inclusions
\[
\image{\mathbf{B}} \subseteq \image{\A} \subseteq \image{\U}\,.
\]
Indeed, $\image{\A} \subseteq \image{\U}$ because $\A = \U \Sig \transpose{\U}$
and $\image{\mathbf{B}} \subseteq \image{\A}$, or equivalently, given
that we are considering symmetric matrices, $\ker{\A} \subseteq \ker{\mathbf{B}}$,
as for all $\by \in \R^d$,
\[
\A \by = 0 \implies \transpose{\by} \A \by = 0
\implies \Bigl[ \transpose{\by} \mathbf{B} \by = 0
\ \mbox{and} \ \transpose{\by}\x \transpose{\x}\by = 0 \Bigr]
\implies \sqrt{\mathbf{B}} \by = 0
\implies \mathbf{B}\by = 0\,,
\]
where we used $\A = \mathbf{B} + \x \transpose{\x}$
to get the second implication,
and where we multiplied $\sqrt{\mathbf{B}} \by$ by $\sqrt{\mathbf{B}}$
to get the final implication.
\end{proof}

\section{Some basic facts of linear algebra}
\label{apd:covariancesequence}

We gather in this appendix some useful results of linear algebra,
that are either reminder of well-known facts or are easy to prove
(yet, we prefer prove them here rather for the proofs above to be
more focused).

\subsection{Gram matrices versus matrices of features}
\label{sec:GramMatr}

Recall that we denoted by
\[
\X_t = \bigl[ \begin{array}{ccc}
\x_1 &
\cdots &
\x_t
\end{array}
\bigr]
\]
the $d \times t$ matrix consisting the first $t$ features.
By definition,
\[
\image{\X_t} = \spn\{\x_1, \ldots, \x_t\}
\qquad \mbox{and} \qquad
\G_t = \X_t \transpose{\X}_t\,.
\]
The aim of this section is to show that, for all $t \geq 1$,
	\begin{equation}
	\label{eqn:covariancerange}
	\image{\G_t} = \image{\X_t}\,.
	\end{equation}
which in turn implies that for all $t \geq 2$,
	\begin{equation}
	\label{eqn:covariancerangesseq}
	\image{\G_{t-1}} \subseteq \image{\G_t}\,,
	\end{equation}
and that $\rank(\G_{t-1})$ and $\rank(\G_t)$ differ from at most~$1$.

First, as for any (not necessarily square) matrix $\M$ we have
$\image{\M} = \smash{\ker\bigl(\transpose{\M}\bigr)^\perp}$,
we note that~\eqref{eqn:covariancerange}
is equivalent to
$\smash{\ker\bigl(\G_t\bigr)^\perp = \ker\bigl(\transpose{\X}_t\bigr)^\perp}$,
thus to
$\ker\bigl(\G_t\bigr) = \ker\bigl(\transpose{\X}_t\bigr)$.
It is clear by definition of $\G_t$ that
$\ker\bigl(\transpose{\X}_t\bigr) \subseteq \ker\bigl(\G_t\bigr)$;
furthermore, for any vector $\ud \in \Rd$, we have the equality $\transpose{\ud} \G_t \ud = \norm{\transpose{\X}_t \ud}^2$, which yields
the opposite inclusion $\ker\bigl(\G_t\bigr) \subseteq \ker\bigl(\transpose{\X}_t\bigr)$.

The inclusion~\eqref{eqn:covariancerangesseq} follows from~\eqref{eqn:covariancerange}
as by definition, the image of $\X_t$ is generated by the image of $\X_{t-1}$ and $\x_t$.

	\subsection{Dynamic of the eigenvalues of Gram matrices}
\label{sec:dyna}

	The above result gives us an idea of how eigenspaces and eigenvalues of the covariance matrix evolve. Another relationship is the following one: for $t \geq 1$,
	\begin{equation}
	\label{eqn:covarianceeigevalues}
	\eigen_k \bigl(\G_{t-1}\bigr) \leq \eigen_k \bigl(\G_{t}\bigr)\,,
	\end{equation}
	where we recall that $\eigen_k \bigl(\G_{t}\bigr)$ denotes the k\ts{th} eigenvalue of $\G_t$ in decreasing order.
To prove this we remark that for all $\ud \in \Rd$, we have
\[
\transpose{\ud} \G_{t-1} \ud \leq \transpose{\ud} \x_{t} \ud + \transpose{\ud} \G_{t-1} \ud = \transpose{\ud} \G_{t} \ud
\]
and use the fact that for all symmetric positive semidefinite matrices $\M$,
\begin{equation}
\label{eq:eigenvaluecharacterization}
\lambda_k(\M) = \max \Biggl\{\min _{\ud}\left\{\frac{\transpose{\ud} \M \ud}{\transpose{\ud} \ud} \mid \ud\in U{\text{ and }} \ud\neq 0\right\}\Bigg| \,\, U \ \mbox{vector space with} \ \dim(U)=k\Biggr\}
\end{equation}

\subsection{Moore-Penrose pseudoinverses: definition and basic properties}
\label{sec:MPI}

In this appendix, we recall the definition and some basic properties of the Moore-Penrose pseudoinverse.
It was introduced by E.H.\ Moore in 1920 and is a generalization of the inverse operator for non-invertible (and non-square) matrices.

\begin{definition}[Moore-Penrose pseudoinverse]
The Moore-Penrose pseudoinverse of an $m \times n$ matrix $\M$ is a $n \times m$ matrix denoted by $\M^\dagger$ and defined as
\begin{equation*}
    \M^\dagger \eqdef \lim_{\alpha \to 0} \big(\transpose{\M} \M + \alpha I_n)^{-1} \transpose{\M}\,,
\end{equation*}
where $\I_n \in \R^{n \times n}$ is the identity matrix and $\alpha \to 0$ while $\alpha >0$.
\end{definition}

We have the following characterization of $\M^\dagger$.

\begin{proposition} \label{prop:pseudoinverse}
Let $\M$ be a $m \times n$ matrix. Its Moore-Penrose pseudoinverse $\M^\dagger$ is unique and
is characterized as the only $n \times m$ matrix simultaneously satisfying the following
four properties:
\vspace*{-7pt}
    \begin{itemize}[noitemsep,nolistsep]
    \begin{multicols}{2}
        \item[(P1)] \qquad $\M \M^\dagger \M = \M$
        \item[(P2)] \qquad $\M^\dagger \M \M^\dagger = \M^\dagger$
        \item[(P3)] \qquad $\transpose{\bigl(\M \M^\dagger\bigr)} = \M \M^\dagger$
        \item[(P4)] \qquad $\transpose{\bigl(\M^\dagger \M\bigr)} = \M^\dagger \M$
    \end{multicols}
    \end{itemize}
\end{proposition}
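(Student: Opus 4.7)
The plan is to argue in three steps. First, show that the limit defining $\M^\dagger$ exists and admits a closed form through the singular value decomposition of $\M$. Second, check the four Penrose properties (P1)--(P4) directly on this closed form. Third, prove uniqueness by purely algebraic manipulations that use only (P1)--(P4).

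For the first step, I would invoke the singular value decomposition $\M = \U \Sig \transpose{\V}$, where $\U$ and $\V$ are orthogonal matrices of sizes $m \times m$ and $n \times n$ and $\Sig$ is an $m \times n$ rectangular diagonal matrix with non-negative diagonal entries $\sigma_1 \geq \cdots \geq \sigma_{\min(m,n)} \geq 0$. A direct calculation based on $\transpose{\U} \U = \I_m$ and $\transpose{\V} \V = \I_n$ gives, for every $\alpha > 0$,
\[
\bigl( \transpose{\M} \M + \alpha \I_n \bigr)^{-1} \transpose{\M}
= \V \bigl( \transpose{\Sig} \Sig + \alpha \I_n \bigr)^{-1} \transpose{\Sig} \, \transpose{\U}\,,
\]
and the matrix sandwiched between $\V$ and $\transpose{\U}$ is $n \times m$ diagonal with entries $\sigma_i / (\sigma_i^2 + \alpha)$, which converge as $\alpha \to 0^+$ to $1/\sigma_i$ when $\sigma_i > 0$ and to $0$ when $\sigma_i = 0$. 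Denoting the limiting $n \times m$ diagonal matrix by $\Sig^\dagger$, we obtain the closed form $\M^\dagger = \V \Sig^\dagger \transpose{\U}$.

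For the second step, checking (P1)--(P4) from this closed form reduces to elementary diagonal algebra. By inspection of the diagonal entries, one has $\Sig \Sig^\dagger \Sig = \Sig$ and $\Sig^\dagger \Sig \Sig^\dagger = \Sig^\dagger$, while $\Sig \Sig^\dagger$ and $\Sig^\dagger \Sig$ are diagonal matrices with entries in $\{0,1\}$, hence symmetric. Inserting $\transpose{\U} \U = \I_m$ and $\transpose{\V} \V = \I_n$ between factors of $\M$ and $\M^\dagger$ then transfers each of these identities to the desired (P1)--(P4).

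The third step, uniqueness, requires a touch of care and constitutes the main obstacle: the four conditions must be chained in just the right order. Given any two $n \times m$ matrices $X$ and $Y$ both satisfying (P1)--(P4), my strategy is to prove the two equalities $X = X \M Y$ and $Y = X \M Y$, from which $X = Y$ follows at once. For the first, I would start from $X = X \M X$ by (P2) for $X$, substitute $\M$ by $\M Y \M$ from (P1) for $Y$, use (P3) for both $X$ and $Y$ to rewrite $\M X = \transpose{X} \transpose{\M}$ and $\M Y = \transpose{Y} \transpose{\M}$, regroup as $\transpose{(\M X \M)}$ which equals $\transpose{\M}$ by (P1) for $X$, and finally re-apply (P3) for $Y$ to reach $X \M Y$. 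A mirror-image derivation starting from $Y = Y \M Y$ and using (P1) for $X$ together with (P4) for both matrices yields $Y = X \M Y$, and concludes the proof.
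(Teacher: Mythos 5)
The paper does not supply its own proof of this proposition: it simply cites \citet{penrose1955generalized}. Your proposal fills that gap with a complete, self-contained argument, and it is correct. The SVD computation shows cleanly that the limit used in the paper's own definition of $\M^\dagger$ exists and equals $\V \Sig^\dagger \transpose{\U}$, from which (P1)--(P4) follow by reducing each identity to the diagonal case ($\Sig \Sig^\dagger \Sig = \Sig$, $\Sig^\dagger \Sig \Sig^\dagger = \Sig^\dagger$, and symmetry of the $0/1$-diagonal matrices $\Sig \Sig^\dagger$ and $\Sig^\dagger \Sig$). Your uniqueness chain is exactly Penrose's original argument; I checked the symmetric-regrouping steps and they close up correctly: from $X = X\M X = X\M Y\M X$ one gets, after writing $\M X$ and $\M Y$ as transposes via (P3), the block $\transpose{\M}\transpose{X}\transpose{\M} = \transpose{(\M X \M)} = \transpose{\M}$, and then (P3) for $Y$ converts $X\transpose{Y}\transpose{\M}$ back to $X\M Y$; the mirror derivation with (P4) gives $Y = X\M Y$, hence $X = Y$. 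What your version buys, relative to the paper's bare citation, is a proof anchored directly in the limit definition the paper uses, which makes the subsequent claims of Corollary~\ref{cor:pseudoinverse}~\ref{eqn:mplimit} verifiable without an outside reference.
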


The proof can be found in~\citet{penrose1955generalized}. In particular, in our analysis we use the following consequences of Proposition~\ref{prop:pseudoinverse}. (We leave the standard proofs to the reader.)

\begin{corollary} \label{cor:pseudoinverse}
Let $\M$ be a $m \times n$ matrix and $\N$ a $n \times p$ matrix. Then,
\begin{enumerate}[label={(\alph*)},noitemsep,nolistsep,topsep=0pt]
    \item \label{eqn:mphermit}
    $\M^{\dagger} = \transpose{\M} \bigl(\M \transpose{\M} \bigr)^{\! \dagger};$
    \item
    \label{eqn:mpproduct}
    if $\transpose{\M} \M = \I_n$ or $\N \transpose{\N}= \I_n$ then
    $
    \bigl(\M \N \bigr)^{\! \dagger} = \N^\dagger \M^\dagger;
    $
    \item \label{eq:MPI-transp}
    if $\transpose{\M}\M = \I_n$, then
    $
    \transpose{\M} = \M^\dagger$ and
    $\M = \big(\transpose{\M}\big)^\dagger;
    $
    \item \label{eqn:mplimit}
    $\M^{\dagger}
= \displaystyle{\lim_{\alpha \rightarrow 0} } \transpose{\M} \big(\lambda \I_m + \M \transpose{\M}\big)^{-1};$
    \item \label{eqn:minnormMPI}
if the equation $\M \x = \bz$ with unknown $\bz \in \R^m$ admits a solution $\x \in \R^n$,
then $\M^\dagger \bz$ is the solution in $\R^n$ with minimal Euclidean norm.
\end{enumerate}
\end{corollary}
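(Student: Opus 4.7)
The plan is to exploit the uniqueness characterization (P1)--(P4) given in Proposition~\ref{prop:pseudoinverse}: for items~(a), (b) and~(c) I would produce a candidate matrix and verify the four defining identities, then invoke uniqueness; items~(d) and~(e) rely on two further ingredients, namely a push-through identity and an orthogonal decomposition. Two preliminary facts should do most of the work throughout. First, whenever $\A$ is symmetric, its pseudoinverse $\A^\dagger$ is symmetric as well, which I would establish by checking that $\transpose{(\A^\dagger)}$ still satisfies (P1)--(P4) for $\A$ and invoking uniqueness. Second, the standard identities $\image{\M\transpose{\M}} = \image{\M}$ and $\ker(\M\transpose{\M})=\ker(\transpose{\M})$, each of which is immediate from $\transpose{u}\M\transpose{\M}u = \norm{\transpose{\M}u}^2$.

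For~(a), I would set $\mathbf{P} = \transpose{\M}(\M\transpose{\M})^\dagger$ and check the four properties against $\M$. The key observation is that, by the first preliminary fact applied to the symmetric matrix $\A = \M\transpose{\M}$, the product $\A\A^\dagger$ is the orthogonal projector onto $\image{\A} = \image{\M}$; this immediately yields $\M\mathbf{P}\M = \A \A^\dagger \M = \M$, i.e., (P1) for $\mathbf{P}$. Symmetry of $(\M\transpose{\M})^\dagger$ delivers (P3) and (P4) in one line each, and (P2) follows by inserting (P2) for $\A$ inside $\mathbf{P}\M\mathbf{P}$. For~(c), the candidate is simply $\mathbf{P}=\transpose{\M}$; the four checks reduce to substitutions of $\transpose{\M}\M=\I_n$, and the assertion $\M=(\transpose{\M})^\dagger$ is obtained by applying the same argument with $\transpose{\M}$ in place of $\M$. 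For~(b), I would take $\mathbf{P}=\N^\dagger \M^\dagger$: under the first hypothesis, (c) gives $\M^\dagger = \transpose{\M}$ so that the middle factor $\M^\dagger \M$ appearing in $\mathbf{P}\M\N$ collapses to $\I_n$; under the second hypothesis, (c) applied to $\transpose{\N}$ gives $\N^\dagger = \transpose{\N}$, so the middle factor $\N\N^\dagger$ in $\M\N\mathbf{P}$ collapses to $\I_n$. In either case, (P1)--(P4) for $\M\N$ reduce cleanly to the corresponding identities for the remaining factor.

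For~(d), I would derive the push-through identity $(\transpose{\M}\M + \alpha\I_n)^{-1}\transpose{\M} = \transpose{\M}(\M\transpose{\M}+\alpha\I_m)^{-1}$ by starting from the trivial equality $\transpose{\M}(\M\transpose{\M}+\alpha\I_m) = (\transpose{\M}\M+\alpha\I_n)\transpose{\M}$ and left- and right-multiplying by the two inverses, which exist for $\alpha>0$ since both bracketed matrices are positive definite; the defining limit of $\M^\dagger$ then yields~(d). For~(e), I would use the orthogonal decomposition $\R^n = \ker(\M) \oplus \image{\transpose{\M}}$ (orthogonal because $\image{\transpose{\M}} = \ker(\M)^\perp$) and write any solution as $\x = \M^\dagger\bz + \mathbf{w}$ with $\mathbf{w} \in \ker(\M)$; item~(a) identifies $\M^\dagger\bz = \transpose{\M}(\M\transpose{\M})^\dagger \bz \in \image{\transpose{\M}}$, so Pythagoras gives $\norm{\x}^2 = \norm{\M^\dagger\bz}^2 + \norm{\mathbf{w}}^2$, minimised at $\mathbf{w}=\mathbf{0}$. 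The step I anticipate being most delicate is the case analysis in~(b): the two hypotheses are not symmetric in the factors, and in each case one of the four properties demands slightly more careful manipulation than the other three.
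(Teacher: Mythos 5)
Your proof is correct; the paper gives no argument for this corollary (it explicitly ``leaves the standard proofs to the reader,'' citing Penrose for the characterization (P1)--(P4)), and verifying the four Penrose conditions against an explicit candidate for (a)--(c), plus the push-through identity for (d) and the orthogonal decomposition $\R^n = \ker(\M)\oplus\image{\transpose{\M}}$ for (e), is exactly the intended standard route. The only step worth writing out explicitly is, in (e), that $\M^\dagger\bz$ is itself a solution --- i.e.\ $\M\M^\dagger\bz=\bz$, which follows from (P1) applied to any given solution $\x_0$ via $\M\M^\dagger\bz=\M\M^\dagger\M\x_0=\M\x_0=\bz$ --- since without this the decomposition $\x=\M^\dagger\bz+\mathbf{w}$ with $\mathbf{w}\in\ker(\M)$ is not available.
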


\bibliography{rd}

\end{document}